\definecolor{Blue}{rgb}{0.9,0.3,0.3}
\newcommand{\squishlist}{
   \begin{list}{$\bullet$}
    { \setlength{\itemsep}{0pt}      \setlength{\parsep}{3pt}
      \setlength{\topsep}{3pt}       \setlength{\partopsep}{0pt}
      \setlength{\leftmargin}{1.5em} \setlength{\labelwidth}{1em}
      \setlength{\labelsep}{0.5em} } }
\newcommand{\squishlisttwo}{
   \begin{list}{$\bullet$}
    { \setlength{\itemsep}{0pt}    \setlength{\parsep}{0pt}
      \setlength{\topsep}{0pt}     \setlength{\partopsep}{0pt}
      \setlength{\leftmargin}{2em} \setlength{\labelwidth}{1.5em}
      \setlength{\labelsep}{0.5em} } }
\newcommand{\squishend}{
    \end{list}  }
\newcommand{\myvec}[1]{\mathbf{#1}}
\newcommand{\myvecsym}[1]{\boldsymbol{#1}}
\newcommand{\valpha}{\myvecsym{\alpha}}
\newcommand{\vc}{\myvec{c}}
\newcommand{\vr}{\myvec{r}}
\newcommand{\vA}{\myvec{A}}
\newcommand{\vN}{\myvec{N}}
\newcommand{\vR}{\myvec{R}}
\newcommand{\vT}{\myvec{T}}
\newcommand{\vU}{\myvec{U}}
\newcommand{\vX}{\myvec{X}}
\newcommand{\expect}[1]{\mathbb{E}\left[ {#1} \right]}
\newcommand{\be}{\begin{equation}}
\newcommand{\ee}{\end{equation}}
\newcommand{\bea}{\begin{eqnarray}}
\newcommand{\eea}{\end{eqnarray}}
\newcommand{\beaa}{\begin{eqnarray*}}
\newcommand{\eeaa}{\end{eqnarray*}}
\DeclareMathAlphabet{\mathpzc}{OT1}{pzc}{m}{n}
\newcommand{\qqquad}{\qquad\qquad}
\DeclareMathOperator{\argmin}{arg\,min}
\DeclareMathOperator{\argmax}{arg\,max}
\newtheorem{assumption}{Assumption}
\newtheorem{mydefinition}{Definition}
\newtheorem{theorem}[mydefinition]{Theorem}
\newcommand{\cDsi}{\mathcal{D}^*_e}
\newcommand{\cDci}{\bar{\mathcal{D}}_e}
\newcommand{\cDst}{\mathcal{D}^*_t}
\newcommand{\cDct}{\bar{\mathcal{D}}_t}
\newcommand{\cD}{\mathcal{D}}
\newcommand{\cR}{\mathcal{R}}
\newcommand{\abs}[1]{\left|#1\right|}
\newcommand{\set}[1]{\left\{#1\right\}}
\mathchardef\mhyphen="2D
\newcommand{\cascadeducb}{{\sf CascadeDUCB}}
\newcommand{\cascadeswucb}{{\sf CascadeSWUCB}}
\newcommand{\batchrank}{{\sf BatchRank}}
\newcommand{\bubblerank}{{\sf BubbleRank}}
\newcommand{\cascadeklucb}{{\sf CascadeKL\mhyphen UCB}}
\newcommand{\cascadeucb}{{\sf CascadeUCB1}}
\newcommand{\rankedexpthree}{\sf RankedEXP3}
\newcommand{\ducb}{{\sf DUCB}}
\newcommand{\swucb}{{\sf SWUCB}}
\newcommand{\expthree}{{\sf Exp3}}
\newcommand{\ucb}{{\sf UCB}}
\newcommand{\barvalphat}{\bar{\valpha}_t}
\newcommand{\cT}{\mathcal{T}}
\setlist[description]{leftmargin=\parindent,labelindent=\parindent}
\acrodef{IR}{information retrieval}
\acrodef{OLTR}{online learning to rank}
\acrodef{LTR}{learning to rank}
\acrodef{CM}{cascade model}
\acrodef{CB}{cascading bandits}
\acrodef{MAB}{multi-armed bandit}
\let\OLDthebibliography\thebibliography
\renewcommand\thebibliography[1]{
  \OLDthebibliography{#1}
  \setlength{\parskip}{2.7pt}
  \setlength{\itemsep}{2pt plus 0.3ex}
}
\begin{document}

\title{Cascading Non-Stationary Bandits: Online Learning to Rank in the Non-Stationary Cascade Model}

\author{
	Chang Li
	\And
	Maarten de Rijke
	\affiliations
	University of Amsterdam 
	\emails
	\{c.li, derijke\}@uva.nl
}

\maketitle

\begin{abstract}
Non-stationarity appears in many online applications such as web search and advertising. 
In this paper, we study the online learning to rank problem in a \emph{non-stationary} environment where user preferences change abruptly at an unknown moment in time. 
We consider the problem of identifying the $K$ most attractive items and propose \emph{cascading non-stationary bandits}, an online learning variant of the \emph{cascading model}, where a user browses a ranked list from top to bottom and clicks on the first attractive item. 
We propose two algorithms for solving this non-stationary problem: $\cascadeducb$ and $\cascadeswucb$. 
We analyze their performance and derive gap-dependent upper bounds on the $n$-step regret of these algorithms. 
We also establish a lower bound on the regret for cascading non-stationary bandits and show that both algorithms match the lower bound up to a logarithmic factor.
Finally, we evaluate their  performance on a real-world web search click dataset. 
\end{abstract}


\section{Introduction}
\label{sec:introduction}
Learning to rank \acused{LTR}\acs{LTR}~\cite{liu2009learning} is a combination of machine learning and information retrieval. 
It is a core problem in many applications, such as web search and recommendation~\cite{liu2009learning,batchrank}. 
The goal of \ac{LTR} is to rank items, e.g., documents, and show the top $K$ items to a user. 
Traditional \ac{LTR} algorithms are supervised, offline algorithms; they learn rankers from human annotated data~\cite{qin2010letor} and/or users' historical interactions~\cite{joachims2002optimizing}. 
Every day billions of users interact with modern search engines and leave a trail of interactions. 
It is feasible and important to design online algorithms that directly learn from such user clicks to help improve users' online experience. 
Indeed, recent studies show that even well-trained production rankers can be optimized by using users' online interactions, such as clicks~\cite{clicklambda}. 

Generally, interaction data is noisy~\cite{joachims2002optimizing}, which gives rise to the well-known exploration vs.\ exploitation dilemma. 
Multi-armed bandit~(\acs{MAB})\acused{MAB}~\cite{auer02finitetime} algorithms have been designed to balance exploration and exploitation. 
Based on \acp{MAB}, many online \ac{LTR} algorithms have been published~\cite{radlinski08learning,kveton15cascading,katariya16dcm,pbmbandit,batchrank,kveton2018bubblerank}. 
These algorithms address the exploration vs. exploitation dilemma in an elegant way and aim to maximize user satisfaction in a stationary environment where users do not change their preferences over time. 
Moreover, they often come with regret bounds. 

Despite the success of the algorithms mentioned above in the stationary case, they may have linear regret in a non-stationary environment where users may change their preferences abruptly at any unknown moment in time. 
Non-stationarity widely exists in real-world application domains, such as search engines and recommender systems~\cite{yu2009piecewise,pereira2018analyzing,wu2018learning,jagerman-when-2019}.  
Particularly, we consider \emph{abruptly changing environments} where user preferences remain constant in certain time periods, named \emph{epochs}, but change occurs abruptly at unknown moments called \emph{breakpoints}. 
The abrupt changes in user preferences give rise to a new challenge of balancing ``remembering" and ``forgetting"~\cite{besbes2014stochastic}: the more past observations an algorithm retains the higher the risk of making a biased estimator, while the fewer observations retained the higher stochastic error it has on the estimates of the user preferences. 

In this paper, we propose \emph{cascading non-stationary bandits}, an online variant of the \ac{CM}~\cite{craswell08experimental} with the goal of identifying the $K$ most attractive items in a non-stationary environment. 
\ac{CM} is a widely-used model of user click behavior~\cite{chuklin2015click,batchrank}. 
In \ac{CM}, a user browses the ranked list from top to bottom and clicks the first attractive item.
The items ranked above the first clicked item are browsed but not attractive since they are not clicked. 
The items ranked below the first clicked item are not browsed since the user stops browsing the ranked list after a click. 
Although \ac{CM} is a simple model, it effectively explains user behavior~\cite{kveton15cascading}. 

Our key technical contributions in this paper are: 
\begin{inparaenum}[(1)]
	\item We formalize a  non-stationary \ac{OLTR} problem as cascading non-stationary bandits. 
	\item  We propose two algorithms, $\cascadeducb$ and $\cascadeswucb$, for solving it. 
	They are  motivated by \emph{discounted UCB} ($\ducb$) and \emph{sliding window UCB} ($\swucb$), respectively~\cite{garivier-2011-nonstationary}. 
	$\cascadeducb$ balances ``remembering" and ``forgetting" by using a discounting factor of past observations, and $\cascadeswucb$ balances the two by using statistics inside a fixed-size sliding window. 
	\item We derive gap-dependent upper bounds on the regret of the proposed algorithms. 
	\item We derive a lower bound on the regret of cascading non-stationary bandits. 
	We show that the upper bounds match this lower bound up to a logarithmic factor. 
	\item We evaluate the performance of $\cascadeswucb$ and  $\cascadeducb$ empirically on a real-world web search click dataset. 
\end{inparaenum}


\section{Background}
\label{sec:background}

We define the learning problem at the core of this paper in terms of cascading non-stationary bandits.
Their definition builds on the \ac{CM} and its online variant \emph{cascading bandits}, which we review in this section.

We write $[n]$ for $\set{1, \dots, n}$. 
For sets $A$ and $B$, we write $A^B$ for the set of all vectors whose entries are indexed by $B$ and take values from $A$. 
We use boldface letters to denote random variables. We denote a set of candidate items by $\cD = [L]$, e.g., a set of preselected documents. 
The presented ranked list is denoted as $\cR \in \Pi_K(\cD)$, where $\Pi_K(\cD)$ denotes the set of all possible  combinations of $K$ distinct items from $\cD$. 
The item at position $k$ in $\cR$ is denoted as $\cR(k)$, and the position of item $a$ in $\cR$ is denoted as $\cR^{-1}(a)$ 

\subsection{Cascade Model}
\label{sec:cascade model}

We refer readers to \cite{chuklin2015click} for an introduction to click models. 
Briefly, a click model models a user's interaction behavior with the search system. 
The user is presented with a $K$-item ranked list $\cR$.  
Then the user browses the list $\cR$ and clicks items that potentially attract him or her. 
Many click models have been proposed and each models a certain aspect of interaction behavior.
We can parameterize a click model by attraction probabilities $\alpha \in [0, 1]^L$ and a click model assumes:
\begin{assumption}
\label{ass:1}
The attraction probability $\alpha(a)$ only depends on item $a$ and is independent of other items.
 \end{assumption}
 
\noindent%
\ac{CM} is a widely-used click model~\cite{craswell08experimental,batchrank}. 
In the \ac{CM}, a user browses the ranked list $\cR$ from the first item $\cR(1)$ to the last item $\cR(K)$, which is called the \emph{cascading assumption}. 
After the user browses an item $\cR(i)$, he or she clicks on $\cR(i)$ with attraction probability $\alpha(\cR(i))$, and then stops browsing the remaining items. 
Thus, the examination probability of item $\cR(j)$ equals the probability of no click on the higher ranked items: $\prod_{i=1}^{j-1} (1-\alpha(\cR(i)))$. 
The expected number of clicks equals the probability of clicking any item in the list: $1-\prod_{i=1}^{K} (1-\alpha(\cR(i)))$. 
Note that the reward does not depend on the order in $\cR$, and thus, in the \ac{CM}, the goal of ranking is to find the $K$ most attractive items. 

The \ac{CM} accepts at most one click in each search session.  
It cannot explain scenarios where a user may click multiple  items. 
The \ac{CM} has been extended in different ways to capture multi-click cases~\cite{chapelle09dynamic,guo09efficient}. 
Nevertheless, \ac{CM} is still the fundamental click model and fits historical click data reasonably well.  
Thus, in this paper, we focus on the \ac{CM} and in the next section we introduce  an online variant of \ac{CM}, called \emph{cascading bandits}.

\subsection{Cascading Bandits}
\label{sec:cascadingbandits}
\emph{Cascading bandits} (\acs{CB})\acused{CB} is defined  by a tuple $B = (\cD, P, K)$, where $\cD = [L]$ is the set of candidate items, $K \leq L$ is the number of positions, $P \in \{0, 1\}^{L}$ is a distribution over binary attractions.

In \ac{CB}, at time $t$, a learning agent builds a ranked list $\vR_t \in \Pi_K(\cD)$ that depends on the historical observations up to $t$ and shows it to the user. 
$\vA_t \in \{0, 1\}^L$ is defined as the \emph{attraction indicator}, which is drawn from $P$ and $\vA_t(\vR_t(i))$ is the attraction indicator of item $\vR_t(i)$. 
The user examines $\vR_t$ from $\vR_t(1)$  to $\vR_t(K)$ and clicks the first attractive item.
Since a \ac{CM} allows at most one click each time, a random variable $\vc_t$ is used to indicate the position of the clicked item, i.e., $\vc_t = \argmin_{i\in [K]}  \mathds{1}\{\vA_t(\vR_t(i))\}$. 
If there is no attractive item, the user will not click, and we set $\vc_t = K+1$ to indicate this case. 
Specifically, if $\vc_t \leq K$, the user clicks an item, otherwise, the user does not click anything. 
After the click or browsing the last item in $\vR_t$, the user leaves the search session. 
The click feedback $\vc_t$ is then observed by the learning agent.
Because of the cascading assumption, the agent knows that items ranked above position $\vc_t$ are observed. 
The reward at time $t$ is defined by the number of clicks:
\begin{equation}
r(\vR_t, \vA_t) = 1 - \prod_{i=1}^{K} (1-\vA_t(\vR_t(i)))\,.
\end{equation}

\noindent%
Under \cref{ass:1},  the attraction indicators of each item in $\cD$ are independently distributed.
Moreover, cascading bandits make another assumption. 
\begin{assumption}
	The attraction indicators are distributed as:
	\begin{equation}
	P(\vA) = \prod_{a \in \cD} P_a(\vA(a))\,,
	\end{equation}
	where $P_a$ is a Bernoulli distribution with a mean of $\alpha(a)$.
	\label{ass:2}
\end{assumption}

\noindent%
Under \cref{ass:1} and~\ref{ass:2}, $\vA_t(a)$, the attraction indicator of item $a$ at time $t$, is drawn independently from other items. 
Thus, the expectation of reward of the ranked list at time $t$ is computed as $\expect{r(\vR_t, \vA_t)} = r(\vR_t,  \alpha)$.  
And the goal of the agent is to maximize the expected number of clicks in $n$ steps. 

Cascading bandits are designed for a stationary environment, where the attraction probability $P$ remains constant. 
However, in real-world applications, users change their preferences constantly~\cite{jagerman-when-2019}, which is called a \emph{non-stationary environment}, and learning algorithms proposed for cascading bandits, e.g., $\cascadeklucb$ and $\cascadeucb$~\cite{kveton15cascading}, may have linear regret in this setting. 
In the next section, we propose cascading non-stationary bandits, the first non-stationary variant of cascading bandits, and then propose two algorithms for solving this problem.


\section{Cascading Non-Stationary Bandits}
\label{sec:cascading non-stationary bandits}
We first define our non-stationary online learning setup, and then we propose two algorithms learning in this setup. 

\subsection{Problem Setup}
\label{sec:setup}

The learning problem we study is called \emph{cascading non-stationary bandits}, a variant of \ac{CB}. 
We define it by a tuple $B = (\cD, P, K, \Upsilon_n)$, where $\cD = [L]$ and $K \leq L$ are the same as in \ac{CB} bandits, $P \in \{0, 1\}^{n\times L}$ is a distribution over binary attractions and $\Upsilon_n$ is the number of abrupt changes in $P$ up to step $n$.
We use $P_t(\vR_t(i))$ to indicate the attraction probability distribution of item $ \vR_t(i)$ at time $t$. 
If $\Upsilon_n = 0$, this setup is  same as \ac{CB}.
The difference is that  we consider a non-stationary learning setup in which $\Upsilon_n > 0$ and the non-stationarity in attraction probabilities characterizes our learning problem.

In this paper, we consider an abruptly changing environment, where  the attraction probability $P$ remains constant within an epoch but can change at any unknown moment in time and the number of abrupt changes up to $n$ steps is $\Upsilon_n$.
The learning agent interacts with cascading non-stationary bandits in the same way as with \ac{CB}. 
Since the agent is in a non-stationary environment, we write $\alpha_t$ for the mean of the attraction probabilities at time $t$ and we evaluate the agent by the expected cumulated regret expressed as:  
\begin{equation}
	R(n) = \sum_{t=1}^{n}\expect{\max_{R \in \Pi_K(\cD)} r(R, \alpha_t) - r(\vR_t, \vA_t)}.
\end{equation}
The goal of the agent it to minimizing the $n$-step regret. 

\subsection{Algorithms}
\label{sec:algorithms}

\begin{algorithm}[t]
	\caption{UCB-type algorithm for Cascading non-stationary bandits.}
	\label{alg:cascadingbandits}
	\begin{algorithmic}[1]
		\STATE \textbf{Input}: discounted factor $\gamma$ or sliding window size $\tau$
		\vspace{0.05in}
		
		\STATE // Initialization
		\STATE \label{alg:initial1} $\forall a \in \cD: \vN_0(a) = 0$
		\STATE \label{alg:initial2} $\forall a \in \cD: \vX_0(a) = 0$
		
		\vspace{0.05in}
		\FOR{$t = 1, 2, \ldots, n$}
		\FOR{$a \in \cD$}
		\STATE // Compute UCBs
		\STATE \label{alg:ucbs}
		$\vU_t(a) \leftarrow  
		\begin{cases}
		\text{Eq. \ref{eq:ducb}} & (\cascadeducb)\\
		\text{Eq. \ref{eq:swucb}} & (\cascadeswucb)
		\end{cases}
		$ 
		\ENDFOR
		\vspace{0.05in}
		
		\STATE // Recommend top $K$ items and receive clicks
		\STATE $\vR_t \leftarrow \argmax_{\vR\in\Pi_K(\cD)} r(\vR, \vU_t)$
		\STATE Show $\vR_t$ and receive clicks $\vc_t \in \{1, \ldots, K+1 \}$
		\vspace{0.05in}
		
		\STATE // Update statistics
		\IF{$\cascadeducb$}  \label{alg:statistics1}
		\STATE // for $\cascadeducb$
		\STATE \label{alg:ducb1} $\forall a \in \cD: \vN_t(a) =  \gamma \vN_{t-1}(a)$
		\STATE \label{alg:ducb2} $\forall a \in \cD: \vX_t(a) =  \gamma \vX_{t-1}(a)$
		\ELSE
		\STATE // for $\cascadeswucb$
		\STATE\label{alg:swucb1} $\forall a \in \cD: \vN_t(a) =  \sum_{s=t-\tau +1}^{t-1}
		\mathds{1}\{a \in \vR_s\}$
		\STATE \label{alg:swucb2}$\forall a \in \cD: \vX_t(a) =  \sum_{s=t-\tau +1}^{t-1}
		\mathds{1}\{\vR_s^{-1}(a) = \vc_s\}$
		
		\ENDIF
		\FOR{$i = 1, \ldots, \min\{\vc_t, K\}$}
		\STATE $a \leftarrow \vR_t(i)$
		\STATE  $\vN_t(a) =  \vN_t(a)  + 1 $
		\STATE  $\vX_t(a) = \vX_t(a) + \mathds{1}\{i = \vc_t \}$
		\label{alg:statistics2}
		\ENDFOR
		\ENDFOR
	\end{algorithmic}
\end{algorithm}

We propose two algorithms for solving cascading non-stationary bandits, $\cascadeducb$ and $\cascadeswucb$. 
$\cascadeducb$ is inspired by $\ducb$ and $\cascadeswucb$ is inspired by $\swucb$~\cite{garivier-2011-nonstationary}.
We summarize the pseudocode of both algorithms in~\cref{alg:cascadingbandits}.

$\cascadeducb$ and  $\cascadeswucb$ learn in a similar pattern. 
They differ in the way they estimate the {Upper Confidence Bound} (UCB) $\vU_t(\vR_t(i))$ of the attraction probability of item $\vR_t(i)$ as time $t$, as discussed later in this section. 
After estimating the UCBs (line~\ref{alg:ucbs}), both algorithms construct $\vR_t$ by including the top $K$ most relevant items by UCB. 
Since the order of top $K$ items only affects the observation but does not affect the payoff of  $\vR_t$, we construct $\vR_t$ as follows:
\begin{equation}
 \vR_t = \argmax_{\cR \in \Pi_K(\cD)} r(\cR, \vU_t). 
\end{equation}
After receiving the user's click feedback $\vc_t$, both algorithms update their statistics (line~\ref{alg:statistics1}--\ref{alg:statistics2}). 
We use $\vN_t(i)$ and $\vX_t(i)$  to indicate the number of items $i$ that have been observed and clicked up to $t$ step, respectively.

To tackle the challenge of non-stationarity, $\cascadeducb$  penalizes old observations with a discount factor $\gamma \in (0, 1)$. 
Specifically, each of the previous statistics is discounted by $\gamma$ (line \ref{alg:ducb1}--\ref{alg:ducb2}).
The UCB of item $a$ is  estimated as: 
\begin{equation}
\label{eq:ducb}
	\vU_t(a) = \bar{\valpha}_t(\gamma, a)+ c_t(\gamma, a),
\end{equation}
where $\bar{\valpha}_t(\gamma, a) = \frac{\vX_t(a)}{\vN_t(a)}$ is the  average of discounted attraction indicators  of item $i$ and
\begin{equation}
c_t(\gamma, a) = 2\sqrt{\frac{\epsilon\ln{N_t(\gamma)}}{\vN_t(a)}}
\end{equation}
is the confidence interval around $\bar{\valpha}_t(i)$ at time $t$. 
Here, we compute $N_t(\gamma) = \frac{1 - \gamma^t}{1 - \gamma}$ as the discounted time horizon. 
As shown in \cite{garivier-2011-nonstationary}, $\alpha_t(a) \in [ \bar{\valpha}_t(\gamma, a) - c_t(\gamma, a) ,  \bar{\valpha}_t(\gamma, a)+ c_t(\gamma, a)] $ holds with high probability.

As to $\cascadeswucb$, it estimates UCBs by observations inside a sliding window with size $\tau$. 
Specifically, it only considers the observations in the previous $\tau$ steps~(line~\ref{alg:swucb1}--\ref{alg:swucb2}). 
The UCB of item $i$ is estimated as
\begin{equation}
\label{eq:swucb}
\vU_t(a) = \bar{\valpha}_t(\tau, a)+ c_t(\tau, a),
\end{equation}
where $\bar{\valpha}_t(\tau, a)=\frac{\vX_t(a)}{\vN_t(a)} $ is the  average of observed attraction indicators of item $a$ inside the sliding window and
\begin{equation}
c_t(\tau, a) = \sqrt{\frac{\epsilon\ln{( t \land \tau)}}{\vN_t(a)}} 
\end{equation}
is the confidence interval, and $t \land \tau = \min(t, \tau)$.

\paragraph{Initialization.}
In the initialization phase, we set all the statistics to $0$ and define  $\frac{x}{0}:=1$ for any $x$ (line~\ref{alg:initial1}--\ref{alg:initial2}).
Mapping back this to UCB, at the beginning, each item has the optimal assumption on the attraction probability with an optimal bonus on uncertainty. 
This is a common initialization strategy for UCB-type bandit algorithms~\cite{Li2018mergedts}.

\if0
\begin{algorithm}
	\caption{$\cascadeducb$}
	\label{alg:cascadeducb}
	\begin{algorithmic}[1]
		\STATE \textbf{Input}: discounted factor $\gamma$
		\vspace{0.05in}
		
		\STATE // Initialization
		\STATE $\forall i \in \cD: \vN_0(a, \gamma) = 0$
		\STATE $\forall i \in \cD: \vX_0(a, \gamma) = 0$
		
		\vspace{0.05in}
		\FOR{$t = 1, 2, \ldots, n$}
		\FOR{$a \in \cD$}
		\STATE Compute UCBs	$\vU_t(i) \leftarrow  \cref{eq:ducb}$ 
		\ENDFOR
		\vspace{0.05in}
		
		\STATE // Recommend top $K$ items and receive clicks
		\STATE $\vR_t \leftarrow \argmax_{\vR\in\Pi_K(\cD)} r(\vR, \vU_t)$
		\STATE Show $\vR_t$ and receive clicks $\vc_t \in \{1, \ldots, K, K+1 \}$
		\vspace{0.05in}
		
		\STATE // Update statistics
		
		\STATE $\forall i \in \cD: \vN_t(a, \gamma) =  \gamma \vN_{t-1}(a, \gamma)$
		\STATE $\forall i \in \cD: \vX_t(a, \gamma) =  \gamma \vX_{t-1}(a, \gamma)$
		
		\FOR{$i = 1, \ldots \min\{\vc_t, K\}$}
		\STATE $a \leftarrow \vR_t(i)$
		\STATE  $\vN_t(i) =  \vN_t(a, \gamma)  + 1 $
		\STATE  $\vX_t(i) = \vX_t(a, \gamma) + \mathds{1}\{\vc_t = i\}$
		\ENDFOR
		\ENDFOR
	\end{algorithmic}
\end{algorithm}

\begin{algorithm}
	\caption{$\cascadeswucb$}
	\label{alg:cascadeswucb}
	\begin{algorithmic}[1]
		\STATE \textbf{Input}: sliding window size $\tau$
		\vspace{0.05in}
		
		\STATE // Initialization
		\STATE $\forall i \in \cD: \vN_0(a, \tau) = 0$
		\STATE $\forall i \in \cD: \vX_0(a, \tau) = 0$
		
		\vspace{0.05in}
		\FOR{$t = 1, 2, \ldots, n$}
		\FOR{$a \in \cD$}
		\STATE Compute UCBs
		$\vU_t(i) \leftarrow  \cref{eq:swucb}$ 
		\ENDFOR
		\vspace{0.05in}
		
		\STATE // Recommend top $K$ items and receive clicks
		\STATE $\vR_t \leftarrow \argmax_{\vR\in\Pi_K(\cD)} r(\vR, \vU_t)$
		\STATE Show $\vR_t$ and receive clicks $\vc_t \in \{1, \ldots, K, K+1 \}$
		\vspace{0.05in}
		
		\STATE // Update statistics
		\STATE $\forall i \in \cD: \vN_t(a, \tau) =  \sum_{s=t-\tau +1}^{t-1}
		\mathds{1}\{a \in \vR_s\}$
		\STATE $\forall i \in \cD: \vN_t(a, \tau) =  \sum_{s=t-\tau +1}^{t-1}
		\mathds{1}\{a = \vc_s\}$
		\FOR{$i = 1, \ldots \min\{\vc_t, K\}$}
		\STATE $a \leftarrow \vR_t(i)$
		\STATE  $\vN_t(a, \tau) =  \vN_t(a, \tau)  + 1 $
		\STATE  $\vX_t(a, \tau) = \vX_t(a, \tau) + \mathds{1}\{\vc_t = i\}$
		\ENDFOR
		\ENDFOR
	\end{algorithmic}
\end{algorithm}
\fi


\section{Analysis}
\label{sec:analysis}

In this section, we analyze the $n$-step regret of $\cascadeducb$ and $\cascadeswucb$. 
We  first derive regret upper bounds on $\cascadeducb$ and $\cascadeswucb$, respectively.
Then we  derive a regret lower bound on cascading non-stationary bandits.
Finally, we discuss our theoretical  results.

\subsection{Regret Upper Bound}
\label{sec:upper bound on cascadeducb}
We refer to $\cDst \subseteq [L]$ as the set of the $K$ most attractive  items in set $\cD$ at time $t$ and $\bar{\cD}_t$ as the complement of $\cDst$, i.e.,  $\forall a \in \cD_t^*\,, \forall a^* \in \bar{\cD}_t: \alpha_t(a) \geq \alpha_t(a^*)$ and $\cDst \cup \cDct = \cD\,, \cDst \cap \cDct = \emptyset$. 
At time $t$, we say an item $a^*$ is optimal if $a^* \in \cDst$ and an item $a$ is suboptimal if $a \in \cDct$. 
The regret at time $t$ is caused by the case that $\vR_t$ includes at least one suboptimal and examined items. 
Let $\Delta_{a, a^*}^t$ be the gap of attraction probability between a suboptimal item $a$ and an optimal $a^*$ at time $t$: $\Delta_{a, a^*}^t = \alpha_t(a^*) - \alpha_t(a)$.
Then we refer to $\Delta_{a, K}$ as the smallest gap of between item $a$ and the $K$-th most attractive item in all $n$ steps when $a$ is not the optimal items: 
$\Delta_{a, K} = \min_{t \in [n], a^* \in \cDst}  \alpha_t(a^*) - \alpha_t(a) $.

\begin{theorem}
	\label{th:upperboundcascadeducb}
	Let $\epsilon \in (1/2, 1)$ and $\gamma \in (1/2, 1)$, the expected $n$-step regret of $\cascadeducb$ is bounded as: 
	\begin{equation}
	\label{eq:upperboundcascadeducb}
	\begin{split}
		&R(n) \leq {}\\
		& L \Upsilon_n\frac{\ln[(1-\gamma)\epsilon]}{\ln\gamma} 
		+ 
		\sum_{a \in \cD}  C(\gamma, a) \lceil n (1-\gamma) \rceil \ln{\frac{1}{1-\gamma}} \,,
	\end{split}
	\end{equation}
	where 
	\begin{equation}
	\begin{split}
	&C(\gamma, a) = {}\\
	&\frac{4}{1-1/e}\ln{(1+4\sqrt{1-1/2\epsilon})} + \frac{32\epsilon}{\Delta_{a, K}\gamma^{1/(1-\gamma)}} \,.
	\end{split}
	\end{equation}
\end{theorem}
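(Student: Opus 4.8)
The plan is to combine the discounted-UCB analysis of Garivier and Moulines with the cascade-model regret decomposition of Kveton et al. First I would reduce the list-level regret to a per-item statement. Since the reward $r(\vR,\alpha) = 1 - \prod_{i=1}^{K}(1-\alpha(\vR(i)))$ is invariant to the order of the chosen items and depends only on the multiset selected, the instantaneous regret $r(\cDst,\alpha_t) - r(\vR_t,\alpha_t)$ can be upper bounded by a sum over mismatched positions, in which each suboptimal item $a \in \cDct$ occupying the examined prefix of $\vR_t$ is charged against a displaced optimal item $a^* \in \cDst$, weighted by the examination probability of its position. The crucial structural fact is that an item increments the discounted count $\vN_t(a)$ exactly when it is examined, so the cascade censoring is aligned with the charging: regret is incurred only on examined suboptimal items, and those are precisely the items whose discounted statistics get updated on line~\ref{alg:statistics2}.

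Next I would split the horizon into \emph{contaminated} and \emph{clean} steps. Following Garivier and Moulines, set $D(\gamma) = \ln[(1-\gamma)\epsilon]/\ln\gamma$, the number of steps after a breakpoint needed for the discounting of pre-breakpoint observations to drop their contribution below the level set by $(1-\gamma)\epsilon$. For each of the $\Upsilon_n$ breakpoints and each of the $L$ items, the at most $D(\gamma)$ steps during which item $a$'s discounted count $\vN_t(a)$ is still contaminated by stale data can each be charged unit regret; summing these per-item transient contributions (an over-count, hence a valid upper bound) yields the first term $L\Upsilon_n D(\gamma)$.

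For the clean steps the environment looks stationary inside the discount window, so I would invoke the discounted Hoeffding-type concentration inequality of Garivier and Moulines (quoted in the excerpt) to define the good event $\{\alpha_t(a)\in[\bar{\valpha}_t(\gamma,a)-c_t(\gamma,a),\ \bar{\valpha}_t(\gamma,a)+c_t(\gamma,a)]\}$, required to hold simultaneously for all items. The complement is controlled by a peeling argument, contributing the $\ln(1+4\sqrt{1-1/2\epsilon})$ piece of $C(\gamma,a)$, while the block-conversion discount factor produces the $1/(1-1/e)$ coefficient. On the good event, a suboptimal item $a$ can be preferred to a displaced optimal item only while its discounted count stays below a threshold of order $\epsilon\ln N_t(\gamma)/\Delta_{a,K}^2$; each such examination incurs regret at most of gap scale, so multiplying the count threshold by $\Delta_{a,K}$ and summing the discounted counts over the horizon yields the $\frac{32\epsilon}{\Delta_{a,K}\gamma^{1/(1-\gamma)}}$ piece. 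Converting discounted counts back to real time via $N_t(\gamma)\le 1/(1-\gamma)$ and partitioning the $n$ steps into $\lceil n(1-\gamma)\rceil$ blocks of effective length $1/(1-\gamma)$ (whose worst-case within-block discount is $\gamma^{1/(1-\gamma)}$, tending to $1/e$ as $\gamma\to 1$) produces the $\lceil n(1-\gamma)\rceil \ln\frac{1}{1-\gamma}$ factor.

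The main obstacle I expect is reconciling the cascade censoring with the discounted counting: unlike standard UCB, where every chosen arm is observed, here an item's statistics update only when it is examined. I must therefore verify that in the charging step each unit of charged regret corresponds to a genuine discounted-count increment of the offending suboptimal item, via a coupling between the examination event $\{\vc_t \geq \vR_t^{-1}(a)\}$ and the update rule. Concretely, I need to show that the position-dependent examination-probability factors appearing in the Kveton-style decomposition are dominated by the indicator that the item is counted, so that the per-item discounted-play bound from the Garivier–Moulines machinery applies term by term and the two analyses compose cleanly.
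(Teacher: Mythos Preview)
Your proposal is correct and follows essentially the same route as the paper: the same contaminated/clean split with $D(\gamma)=B(\gamma)=\lceil\log_\gamma((1-\gamma)\epsilon)\rceil$, the same discounted Hoeffding bound from Garivier--Moulines with the peeling choice $\eta=4\sqrt{1-1/2\epsilon}$, the same Kveton-et-al.\ cascade decomposition into events $G_{t,a,a^*}$, and the same block conversion via Lemma~1 of Garivier--Moulines. Your ``main obstacle'' is resolved exactly as you suspect: the event $G_{t,a,a^*}$ is defined to require $\vR_t^{-1}(a)<\vc_t$, so every charged suboptimal item is examined and hence increments its discounted count, which is precisely what makes the Garivier--Moulines per-item counting lemma applicable term by term.
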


\noindent%
We outline the proof in $4$ steps below; the full version is in \cref{sec:proofoftheorem2}.\footnote{\url{https://arxiv.org/abs/1905.12370}}

\begin{proof}
Our proof is adapted from the analysis in \cite{kveton15cascading}. 
The novelty of the proof comes from the fact that, in a non-stationary environment, the discounted estimator $\barvalphat(\gamma, a)$ is now a biased estimator of $\alpha_t(a)$ (\emph{Step 1, 2} and \emph{4}).

\emph{Step 1.} We bound the regret of the event that estimators of the attraction probabilities are biased by $L\Upsilon\frac{\ln[(1-\gamma)\epsilon]}{\ln\gamma}$. 
This event happens during the steps following a breakpoint. 

\emph{Step 2.} We bound the regret of the event that $\alpha_t(a)$ falls outside of the confidence interval around $\barvalphat(\gamma, a)$ by $\frac{4}{1-1/e}\ln{(1+4\sqrt{1-1/2\epsilon})} n(1-\gamma) \ln\frac{1}{1-\gamma}$. 

\emph{Step 3.} We decompose the regret at time $t$ based on \cite[Theorem 1]{kveton15cascading}.

\emph{Step 4.} For each item $a$, we bound the number of times that item $a$ is chosen when $a \in \cDct$ in $n$ steps and get the term $\frac{32\epsilon \lceil n (1-\gamma) \rceil  \ln\frac{1}{1-\gamma}}
{\Delta_{a, K}\gamma^{1/(1-\gamma)}}  $. 
Finally, we sum up all the regret. 
\end{proof}

\noindent%
The bound depends on step $n$ and the number of breakpoints $\Upsilon_n$. 
If they are known beforehand, we can choose $\gamma$ by minimizing the right hand side of Eq.~\ref{eq:upperboundcascadeducb}. 
Choosing $\gamma = 1 - 1/4 \sqrt{(\Upsilon_n/n)}$ leads to $R(n) = O(\sqrt{n\Upsilon_n}\ln n)$. 
When $\Upsilon_n$ is independent of $n$, we have $R(n) = O(\sqrt{n\Upsilon}\ln n)$. 

\begin{theorem}
	\label{th:upperboundcascadeswucb}
	Let $\epsilon \in (1/2, 1)$. For any integer $\tau$, the expected $n$-step regret of $\cascadeswucb$ is bounded as: 
	\begin{equation}
	\label{eq:upperboundcascadeswucb}
	\begin{split}
	\mbox{}\hspace*{-2.5mm}&R(n) \leq \\
	\mbox{}\hspace*{-2.5mm}&L  \Upsilon_n\tau \!+\! \frac{L\ln^2\tau}{\ln(1+4\sqrt{(1-1/2\epsilon)})} \!+\! \sum_{a \in \cD} C(\tau, a) \frac{n\ln \tau}{\tau} ,\mbox{}\hspace*{-2mm}
	\end{split}
	\end{equation}
	where 
	\begin{equation}
	\begin{split}
	&C(\tau, a) = {}\\
	&\frac{2}{\ln\tau} \left\lceil \frac{\ln \tau}{\ln(1+4\sqrt{(1-1/2\epsilon)})}\right\rceil + \frac{8\epsilon}{\Delta_{a, K}} \frac{\lceil n/\tau \rceil}{n/\tau}.
	\end{split}
	\end{equation}
	When $\tau$ goes to infinity and $n/\tau$ goes to $0$, 
	\begin{equation}
	 C(\tau, a) = \frac{2}{\ln(1+4\sqrt{(1-1/2\epsilon)})}+ \frac{8\epsilon}{\Delta_{a, K}} .
	\end{equation}
\end{theorem}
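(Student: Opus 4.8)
The plan is to mirror the four-step argument used for $\cascadeducb$ in \cref{th:upperboundcascadeducb}, replacing the discounting machinery of \cite{garivier-2011-nonstationary} by its sliding-window counterpart. Throughout I call a round $t$ \emph{clean} if its window $[t-\tau+1,t]$ lies inside a single epoch, so that every observation feeding $\barvalphat(\tau,a)$ is drawn from the current distribution, and \emph{contaminated} otherwise. \emph{Step 1} handles the contaminated rounds: each of the $\Upsilon_n$ breakpoints contaminates at most $\tau$ subsequent rounds, on which $\barvalphat(\tau,a)$ may be a biased estimator of $\alpha_t(a)$. Bounding the per-round regret on a contaminated round by one unit per item and summing over the $L$ items and $\Upsilon_n$ breakpoints yields the leading $L\Upsilon_n\tau$ term.

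\emph{Step 2} controls the concentration failures on clean rounds. On such a round $\barvalphat(\tau,a)$ is an average of $\vN_t(a)$ Bernoulli attraction indicators with common mean $\alpha_t(a)$, so I would invoke the sliding-window deviation inequality of \cite{garivier-2011-nonstationary} together with a peeling argument over the random window count $\vN_t(a)\in\{1,\dots,\tau\}$ on a geometric grid of ratio $1+4\sqrt{1-1/2\epsilon}$. Summing the resulting union bound over the $\lceil n/\tau\rceil$ disjoint length-$\tau$ blocks controls the expected number of rounds on which $\alpha_t(a)$ escapes $[\,\barvalphat(\tau,a)-c_t(\tau,a),\ \barvalphat(\tau,a)+c_t(\tau,a)\,]$, producing the $\frac{L\ln^2\tau}{\ln(1+4\sqrt{1-1/2\epsilon})}$ term, in which the peeling ratio is exactly the quantity appearing in the denominator.

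\emph{Steps 3 and 4} then assemble the exploration term. On a clean round where all confidence intervals hold, I apply \cite[Theorem 1]{kveton15cascading} to decompose the instantaneous regret into a sum over suboptimal/optimal pairs weighted by the examination probability of $\vR_t$; this transfers verbatim from the stationary analysis because it concerns a single fixed round and is independent of how $\vU_t$ is formed. Fixing a suboptimal $a\in\cDct$, selecting examined $a$ over an optimal $a^*\in\cDst$ forces $2c_t(\tau,a)\ge\Delta_{a,K}$, hence $\vN_t(a)\le 4\epsilon\ln\tau/\Delta_{a,K}^2$; since $\vN_t(a)$ counts only the last $\tau$ observations, the number of such selections inside any length-$\tau$ block is $O(\ln\tau/\Delta_{a,K}^2)$, contributing $O(\epsilon\ln\tau/\Delta_{a,K})$ regret per block, and summing over the $\lceil n/\tau\rceil$ blocks gives the $\frac{8\epsilon}{\Delta_{a,K}}\lceil n/\tau\rceil\ln\tau$ contribution. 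Collecting Steps 1--4 and writing $\lceil n/\tau\rceil=(n/\tau)\cdot\frac{\lceil n/\tau\rceil}{n/\tau}$ reassembles the three displayed terms and the stated $C(\tau,a)$; in the regime where the ceiling factors converge to their arguments, $C(\tau,a)$ collapses to its limiting value.

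The hard part will be making Step 2 rigorous under cascade feedback: the count $\vN_t(a)$ is random and adaptive, since an item is observed only when it is placed above the click position, which itself depends on past clicks, so the plain Hoeffding bound does not apply directly. I would resolve this exactly as in \cref{th:upperboundcascadeducb}, by conditioning on the placement and observation pattern and verifying that, within a single epoch, the observed attraction indicators of $a$ form an i.i.d.\ $\bernoulli(\alpha_t(a))$ sequence compatible with the Garivier--Moulines deviation inequality, so that the peeling union bound over the random sample size remains valid.
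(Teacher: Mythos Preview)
Your proposal is correct and mirrors the paper's proof almost exactly: the same four-step decomposition, the same clean/contaminated split yielding $L\Upsilon_n\tau$, the same Garivier--Moulines sliding-window concentration with peeling at ratio $1+4\sqrt{1-1/2\epsilon}$, the same use of \cite[Theorem~1]{kveton15cascading} to decompose the per-round regret, and the same $\vN_t(a)\le 4\epsilon\ln\tau/\Delta_{a,K}^2$ argument combined with a count of $\lceil n/\tau\rceil$ windows.

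One minor bookkeeping point: you attribute to Step~2 only the $\frac{L\ln^2\tau}{\ln(1+4\sqrt{1-1/2\epsilon})}$ term, but in the paper Step~2 produces \emph{two} contributions---that term (from rounds $t<\tau$) and an additional $\frac{2Ln\ln\tau}{\tau\ln(1+4\sqrt{1-1/2\epsilon})}$ term (from rounds $t\ge\tau$), the latter being precisely the first summand of $C(\tau,a)$ after rescaling. The paper also does not partition into $\lceil n/\tau\rceil$ blocks in Step~2; it simply sums the per-step failure probability $2\lceil\ln(t\wedge\tau)/\ln(1+\eta)\rceil/(t\wedge\tau)$ over $t\in\cT$ directly. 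The block structure enters only in Step~4 via \cite[Lemma~25]{garivier-2008-nonstationary}. These are cosmetic differences; the substance of your argument matches the paper's.
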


\noindent%
We outline the proof in $4$ steps below and the full version is in \cref{sec:proofoftheorem3}. 

\begin{proof}
The proof follows the same lines as the proof of \cref{th:upperboundcascadeducb}. 

\emph{Step 1.} We bound the regret of the event that estimators of the attraction probabilities are biased by $L  \Upsilon_n\tau$. 

\emph{Step 2.} We bound the regret of the event that $\alpha_t(a)$ falls outside of the confidence interval around $\barvalphat(\tau, a)$ by 
\begin{equation}
\ln^2\tau + 2n \left\lceil \frac{\ln \tau}{\ln(1+4\sqrt{(1-1/2\epsilon)})}\right\rceil. 
\end{equation}

\emph{Step 3.} We decompose the regret at time $t$ based on \cite[Theorem 1]{kveton15cascading}.

\emph{Step 4.} For each item $a$, we bound the number of times that item $a$ is chosen when $a \in \cDct$ in $n$ steps and get the term $ \frac{8\epsilon}{\Delta_{a, K}} \lceil \frac{ n}{\tau}\rceil $. 
Finally, we sum up all the regret. 
\end{proof}

\noindent%
If we know $\Upsilon_n$ and $n$ beforehand, we can choose the window size $\tau$ by minimizing the right hand side of Eq.~\ref{eq:upperboundcascadeswucb}. 
Choosing $\tau = 2\sqrt{n\ln(n) / \Upsilon_n}$ leads to $R(n) = O(\sqrt{n\Upsilon_n\ln n})$. 
When $\Upsilon_n$ is independent of $n$, we have $R(n) = O(\sqrt{n\Upsilon\ln n})$. 

\subsection{Regret Lower Bound}
\label{sec:lower bound}

We consider a particular cascading non-stationary bandit and refer to it as  $B_{L}=(L, K, \Delta, p, \Upsilon)$. 
We have a set of $L$ items  $\cD = [L]$ and $K = \frac{1}{2} L$ positions. 
At any time $t$, the distribution of attraction probability of each item $a\in\cD$ is parameterized by: 
\begin{equation}
\alpha_t(a)= \begin{cases}
	p \qqquad   \text{if }a \in \cDst\\
	p-\Delta  \qquad \text{if }a \in \cDct, 
\end{cases}
\end{equation}
where $ \cDst$ is the set of optimal items at time $t$, $\cDct$ is the set suboptimal items at time $t$,  and $\Delta \in (0, p]$ is the gap between optimal items and suboptimal items.
Thus, the attraction probabilities only take two values: $p$ for optimal items and $p-\Delta$ for suboptimal items up to $n$-step. 
$\Upsilon$ is the number of breakpoints when the attraction probability of an item changes from $p$ to $p-\Delta$ or other way around. 
Particularly, we consider a simple variant that  the distribution of attraction probability of each item is piecewise constant and has two breakpoints. 
And we assume another constraint on the number of optimal items that $|\cDst| = K$ for all time steps $t\in[n]$.
Then, the regret that any learning policy can achieve when interacting with $B_L$ is lower bounded by \cref{th:lowerbound}.

\begin{theorem}
	\label{th:lowerbound}
	The $n$-step regret of any learning algorithm inter\-acting with 
	cascading non-stationary bandit $B_{L}$ is lower bounded as follows:
	\begin{equation}
		\label{eq:lowerbound}
		\liminf\limits_{n\rightarrow \infty} R(n) \geq L \Delta (1-p)^{K-1} \sqrt{\frac{2 n}{3D_{KL}(p-\Delta || p)}}, 
	\end{equation}
	where $D_{KL}(p-\Delta || p)$ is the Kullback-Leibler (KL) divergence between two Bernoulli distributions with means $p-\Delta$ and $p$. 
\end{theorem}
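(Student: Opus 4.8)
\emph{Approach.} The plan is to adapt the change-of-measure lower bound for switching bandits of \cite{garivier-2011-nonstationary} to the cascade model, combining it with a reward decomposition that exposes the examination factor $(1-p)^{K-1}$. There are two essentially independent ingredients: a \emph{regret-to-mistakes} reduction specific to the cascade reward, and an \emph{information-theoretic} bound on how often any policy must display currently-suboptimal items while tracking an abrupt change.

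\emph{Regret decomposition.} First I would lower bound the instantaneous regret by the number of suboptimal items in the displayed list. Since $r(\cR,\alpha)=1-\prod_i(1-\alpha(\cR(i)))$ is symmetric in the positions, a list holding $m$ items from $\cDct$ (attraction $p-\Delta$) and $K-m$ items from $\cDst$ (attraction $p$) has instantaneous regret exactly $(1-p+\Delta)^{m}(1-p)^{K-m}-(1-p)^{K}$. Using the elementary bound $(1-p+\Delta)^m-(1-p)^m\ge m\,\Delta\,(1-p)^{m-1}$, this is at least $m\,\Delta\,(1-p)^{K-1}$, so
\[
R(n)\ \ge\ \Delta (1-p)^{K-1}\sum_{t=1}^{n}\sum_{a\in\cDct}\mathbb{P}[a\in\vR_t].
\]
Thus, up to the factor $\Delta(1-p)^{K-1}$, the regret is governed by the expected number of displays of items that are suboptimal at the current time.

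\emph{Change of measure and optimization.} Next I would fix an item $a$ and a window length $\tau_0$, partition $[n]$ into $B=\lceil n/\tau_0\rceil$ windows, and compare the reference instance (in which $a$ stays in $\cDct$) with alternatives $\nu_j$ in which $a$ is promoted into $\cDst$ (swapping roles with a currently optimal item) throughout window $j$ only, which introduces two breakpoints per alternative. The key point is that the policy learns about $a$ only when $a$ is \emph{examined}, and each examination contributes at most $D_{KL}(p-\Delta\,\|\,p)$ to the divergence between observation laws; since an item is examined at most once per display, the total divergence is at most $D_{KL}(p-\Delta\,\|\,p)$ times the expected number of displays of $a$. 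Writing $S$ for that expected count, the decomposition above gives $\sum_j D(\nu\,\|\,\nu_j)\le \tfrac{D_{KL}(p-\Delta\,\|\,p)}{\Delta(1-p)^{K-1}}\,R_\nu(n)$ and $R_\nu(n)\ge \Delta(1-p)^{K-1} S$. A Bretagnolle--Huber-type inequality then forces the average alternative to fail to track its change with probability at least $\tfrac12 e^{-\bar D}$, so it incurs window regret of order $\tau_0\,\Delta(1-p)^{K-1}$ whenever $S$ is small. Taking the worse of the reference and the average alternative, balancing the two competing terms, and choosing $\tau_0\asymp\sqrt{n/D_{KL}(p-\Delta\,\|\,p)}$ yields the $\Omega\!\big(\sqrt{n/D_{KL}(p-\Delta\,\|\,p)}\big)$ rate per tracked item. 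Summing over the $L$ items (during a mis-tracked window the list contains $K=\tfrac12 L$ suboptimal items, and both breakpoints of each alternative contribute) produces the factor $L$ and, after the precise optimization, the constant $\sqrt{2/3}$; the $\liminf$ reflects that the optimization and the exponential inequality hold asymptotically in $n$.

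\emph{Main obstacle.} The crux is the cascade coupling: the examination probability simultaneously sets the per-mistake regret, namely $(1-p)^{K-1}$, and the rate of information acquisition about an item, and since the policy controls positions it could in principle place a suspected item at the top to gather information cheaply without extra regret. Handling this cleanly --- via the bound ``examinations $\le$ displays'', which makes $(1-p)^{K-1}$ appear in the denominator relating the KL budget to the reference regret \emph{independently of position} --- together with bookkeeping over both breakpoints so the constants match the bound claimed in \cref{th:lowerbound}, is the delicate part of the argument.
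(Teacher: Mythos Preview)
Your proposal is correct and follows the same two-step skeleton as the paper --- a regret-to-mistakes reduction followed by a Garivier--Moulines-style switching lower bound, summed over the $L$ items --- but you execute both steps differently. For the first step the paper invokes \cite[Theorem~1]{kveton15cascading} and works with the event $G_{t,a,a^*}$ (item $a$ displayed at the slot where $a^*$ sits in the optimal list, examined, not clicked), arriving at $R(n)\ge \Delta(1-p)^{K-1}\sum_a \vT_n(a)$ with $\vT_n(a)$ counting \emph{examinations} of $a$ while suboptimal; your direct inequality $(1-p+\Delta)^m(1-p)^{K-m}-(1-p)^K\ge m\Delta(1-p)^{K-1}$ gives the same factor but counts \emph{displays}, which is more elementary and slightly tighter. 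For the second step the paper simply cites \cite[Theorem~3]{garivier-2011-nonstationary} as a black box to get $\liminf \vT_n(a)\ge\sqrt{2n/(3D_{KL})}$, whereas you re-derive the change-of-measure argument via Bretagnolle--Huber and the window construction; in doing so you explicitly confront the cascade-specific coupling between examination (which governs information) and display (which governs regret), handling it through the bound ``examinations $\le$ displays'' --- a point the paper's proof glosses over when it applies the single-arm result directly. What your route buys is a self-contained argument that makes the adaptation to partial observability transparent; what the paper's route buys is brevity, at the cost of leaving the reader to check that the cited theorems really transfer to the cascade observation model.
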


\begin{proof}
The proof is based on the analysis in \cite{kveton15cascading}. 
We first refer to  $\cR_t^*$ as the optimal list at time $t$ that includes $K$ items. 
For any time step $t$, any item $a \in \cDct$ and any item $a^* \in \cDst$, we define the event that item $a$ is included in $\vR_t$ instead of item $a^*$ and  item $a$ is examined but not clicked at time step $t$ by: 
\begin{equation}
\begin{split}
& G_{t, a, a^*} = {}\\
& \quad\{ \exists 1\leq k <\vc_t ~s.t.~ \vR_t(k) = a \,  , \cR_t(k)  = a^*\}.
\end{split}
\end{equation}
By \cite[Theorem 1]{kveton15cascading}, the regret at time $t$ is decomposed as:
\begin{equation}
	\mathbb{E}[r(\vR_t, \valpha_t)] \geq  \Delta (1-p)^{K-1}\sum_{a \in \cDct} \sum_{a* \in \cDst} \mathds{1}\{ G_{a, a^*, t} \}.
\end{equation}
Then, we bound the $n$-step regret as follows:
\begin{equation}
\begin{split}
	R(n) &\geq  \Delta (1-p)^{K-1}  \sum_{t=1}^{n} \sum_{a \in \cDct} \sum_{a^* \in \cDst} \mathds{1}\{G_{t, a, a^*}\} \\ 
	&\geq  \Delta (1-p)^{K-1} \sum_{a \in \cD}\sum_{t=1}^{n} \mathds{1}\{a\in \cDct, a \in\vR_t\}\\
	 &= \Delta (1-p)^{K-1}  \sum_{a \in \cD} \vT_n(a),
\end{split}	 
\end{equation}
where $\vT_n(a) =  \sum_{t=1}^{n}\mathds{1}\{a\in \cDct, a\in\vR_t, \vR_t^{-1}(a) \leq \vc_t \}$. 
The second inequality is based on the fact that, at time $t$,  the event $G_{t, a, a^*}$ happens if and only if item $a$ is suboptimal and examined. 
By the results of \cite[Theorem 3]{garivier-2011-nonstationary}, if a suboptimal item $a$ has not been examined enough times, the learning policy may play this item for a long period after a breakpoint. 
And we get: 
\begin{equation}
		\liminf\limits_{n\rightarrow \infty} \vT(n) \geq  \sqrt{\frac{2 n}{3D_{KL}(p-\Delta || p)}}.
\end{equation}
We sum up all the inequalities and obtain: 
\begin{equation*}
\liminf\limits_{n\rightarrow \infty} R(n) \geq L \Delta (1-p)^{K-1} \sqrt{\frac{2 n}{3D_{KL}(p-\Delta || p)}} . \qedhere
\end{equation*}
\end{proof}

\subsection{Discussion}
\label{sec:discussion}
We have shown that the $n$-step regret upper bounds of $\cascadeducb$  and  $\cascadeswucb$ have the order of $O(\sqrt{n} \ln n)$and $O(\sqrt{n\ln n})$, respectively. 
They match the lower bound proposed in \cref{th:lowerbound} up to a logarithmic factor. 
Specifically, the upper bound of $\cascadeducb$ matches the lower bound up to $\ln n$. 
The upper bound of $\cascadeswucb$ matches the lower bound up to $\sqrt{\ln n}$, an improvement over $\cascadeducb$, as confirmed by experiments in~\cref{sec:experiments}. 

We have assumed that step $n$ is know beforehand. 
This may not always be the case. 
We can extend $\cascadeducb$ and $\cascadeswucb$ to the case where $n$ is unknown by using the \emph{doubling trick}~\cite{garivier-2011-nonstationary}. 
Namely, for $t > n$ and any $k$, such that $2^k \leq t < 2^{k+1}$, we reset $\gamma = 1 - \frac{1}{4\sqrt{2^k}}$ and $\tau = 2\sqrt{2^k\ln(2^k)}$. 

$\cascadeducb$ and $\cascadeswucb$ can be computed efficiently. 
Their complexity is linear in the number of time steps. 
However, $\cascadeswucb$ requires extra memory to remember past ranked lists and rewards to update $\vX_t$ and $\vN_t$.


\section{Experimental Analysis}
\label{sec:experiments}

We evaluate $\cascadeducb$ and $\cascadeswucb$ on the \emph{Yandex} click  dataset,\footnote{\url{https://academy.yandex.ru/events/data_analysis/relpred2011}} which is the largest public click collection. 
It contains  more than $30$ million search sessions, each of which  contains at least one search query. 
We process the queries in the same manner as  in~\cite{batchrank,kveton2018bubblerank}. 
Namely, we randomly select $100$ frequent search queries with the $10$ most attractive items in each query, and then learn a CM for each query using PyClick.\footnote{\url{https://github.com/markovi/PyClick}}
These CMs are used to generate click feedback. 
In this setup, the size of  candidate items is $L=10$ and we choose $K=3$ as the number of positions. 
The objective of the learning task is to identify $3$ most attractive items and then maximize the expected number of clicks at the $3$  highest positions. 

We consider a simulated non-stationary environment setup, where we take the learned attraction probabilities as the default and change the attraction probabilities periodically. 
Our simulation can be described in $4$ steps: 
\begin{inparaenum}[(1)]
		\item For each query, the attraction probabilities of the top $3$ items remain constant over time. 
		\item We  randomly choose three suboptimal items and set their attraction probabilities to $0.9$ for the next $m_1$ steps. 
		\item Then we reset the attraction probabilities and keep them constant for the next $m_2$ steps.  
		\item We repeat step (2) and step (3) iteratively.
\end{inparaenum}
This simulation mimics abrupt changes in user preferences and is widely used in previous work on non-stationarity~\cite{garivier-2011-nonstationary,wu2018learning,jagerman-when-2019}. 
In our experiment, we set $m_1 = m_2 = 10$k and choose $10$ breakpoints. 
In total, we run experiments for $100$k steps.
Although the non-stationary aspects in our setup are simulated, the other parameters of a CM are learned from the Yandex click dataset.

We compare $\cascadeducb$ and $\cascadeswucb$,  to $\rankedexpthree$~\cite{radlinski08learning}$, \cascadeklucb$~\cite{kveton15cascading} and $\batchrank$~\cite{batchrank}. 
We describe the baseline algorithms in slightly more details in \cref{sec: related work}. 
Briefly, $\rankedexpthree$, a variant of ranked bandits, is based on an adversarial bandit algorithm $\expthree$~\cite{auer95gambling}; it is the earliest bandit-based ranking algorithm and is popular in practice. 
$\cascadeklucb$~\cite{kveton15cascading} is a near optimal algorithm in CM. 
$\batchrank$~\cite{batchrank} can learn in a wide range of click models. 
However, these algorithms only learn in a stationary environment. 
We choose them as baselines to show the superiority of our algorithms in a non-stationary environment.
In experiments,  we set $\epsilon=0.5$, $\gamma = 1 - 1/(4 \sqrt{n})$ and $\tau= 2\sqrt{n\ln(n)}$, the values that roughly minimize the upper bounds.

\begin{figure}[t]
	\includegraphics{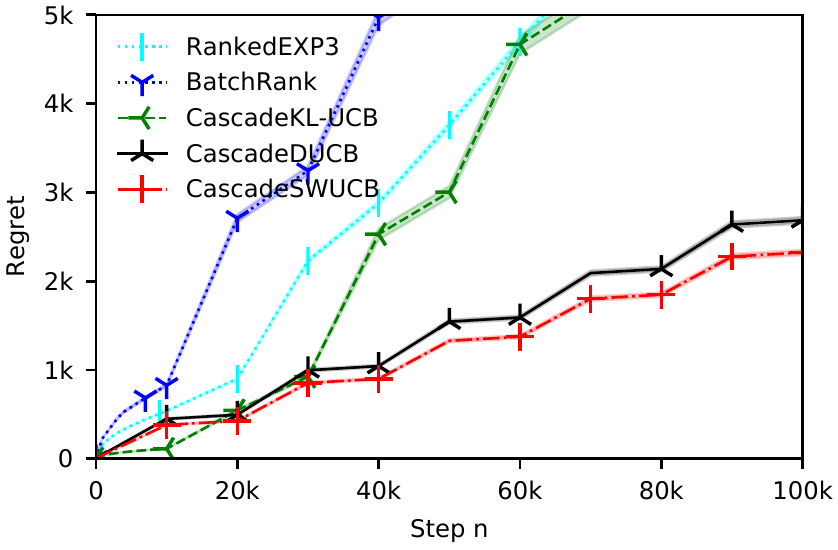}
	\caption{The n-step regret in up to $100$k steps. Lower is better. The results are averaged over all $100$ queries and $10$ runs per query. The shaded regions represent standard errors of our estimates.}\label{fig:regret}
\end{figure}

We report the $n$-step regret averaged over $100$ queries and $10$ runs per query in~\cref{fig:regret}. 
All baselines have linear regret in time step. 
They fail in capturing the breakpoints. 
Non-stationarity makes the baselines perform even worse during epochs where the attraction probability are set as  the default.
E.g., $\cascadeklucb$ has $111.50 \pm 1.12$ regret in the first $10$k steps but has $447.82\pm137.16$ regret between step $80$k and $90$k. 
Importantly, the attraction probabilities equal the default and remain constant inside these two epochs. 
This is caused by the fact that the baseline algorithms rank items based on all historical observations, i.e., they do not balance ``remembering" and ``forgetting." 
Because of the use of a discounting factor and/or a sliding window, $\cascadeducb$ and $\cascadeswucb$ can detect breakpoints and show convergence. 
$\cascadeswucb$ outperforms $\cascadeducb$ with a small gap; this is consistent with our theoretical finding that $\cascadeswucb$ outperforms $\cascadeducb$ by a $\sqrt{\ln n}$ factor.


\section{Related Work}
\label{sec: related work}

The idea of directly learning to rank from user feedback has been widely studied in a stationary environment.
\emph{Ranked bandits}~\cite{radlinski08learning} are among the earliest \ac{OLTR} approaches. 
In ranked bandits, each position in the list is modeled as an individual underlying \acp{MAB}. 
The ranking task is then solved by asking each individual \ac{MAB} to recommend an item to the attached position. 
Since the reward, e.g., click, of a lower position is affected by higher positions, the underlying \ac{MAB} is typically adversarial, e.g., $\expthree$~\cite{auer95gambling}.
$\batchrank$ is a  recently proposed \ac{OLTR} method~\cite{batchrank}; it is an elimination-based algorithm: once an item is found to be inferior to $K$ items, it will be removed from future consideration. 
$\batchrank$ outperforms ranked bandits in the stationary environment.
In our experiments, we include  $\batchrank$ and $\rankedexpthree$, the $\expthree$-based ranked bandit algorithm, as baselines.

Several \ac{OLTR} algorithms have been proposed in specific click models~\cite{kveton15cascading,pbmbandit,katariya16dcm,oosterhuis-differentiable-2018}. 
They can efficiently learn an optimal ranking given the click model they consider. 
Our work is related to cascading bandits and we compare our algorithms to $\cascadeklucb$, an algorithm proposed for soling cascading bandits~\cite{kveton15cascading}, in \cref{sec:experiments}. 
Our work differs from cascading bandits in that we consider learning in a non-stationary environment.

Non-stationary bandit problems have been widely studied~\cite{slivkins2008adapting,yu2009piecewise,garivier-2011-nonstationary,besbes2014stochastic,liu2018change}.  
However, previous work requires a small action space. 
In our setup, actions are (exponentially many) ranked lists.
Thus, we do not consider them as baselines in our experiments. 

In \emph{adversarial bandits} the reward realizations, in our case attraction indicators, are selected by an \emph{adversary}. 
Adversarial bandits originate from game theory~\cite{blackwell1956analog} and have been widely studied, cf.~\cite{auer95gambling,cesa2006prediction} for an overview. 
Within  adversarial bandits, the performance of a policy is often measured by comparing to a static oracle which always chooses a single best arm that is obtained after seeing all the reward realizations up to step $n$. 
This static oracle can perform poorly in a non-stationary case when the single best arm is suboptimal for a long time between two breakpoints.
Thus, even if a policy performs closely to the static oracle, it can still perform sub-optimally in a non-stationary environment. 
Our work differs from adversarial bandits in that we compare to a dynamic oracle that can balance the dilemma of ``remembering" and ``forgetting" and chooses the per-step best action.


\section{Conclusion}
\label{sec:conclusion}

In this paper, we have studied the \acf{OLTR} problem in a non-stationary environment where user preferences change abruptly. 
We focus on a widely-used user click behavior model \acf{CM} and have proposed an online learning variant of it called \emph{cascading non-stationary bandtis}.
Two algorithms, $\cascadeducb$ and $\cascadeswucb$, have been proposed for solving it.  
Our theoretical  have shown that they have sub-linear regret.  
These theoretical findings have been confirmed by our experiments on the Yandex click dataset. 
We open several future directions for non-stationary \ac{OLTR}. 
First, we have only considered the \ac{CM} setup.  
Other click models that can handle multiple clicks, e.g.,  DBN~\cite{chapelle09dynamic}, should be considered as part of future work. 
Second, we focused on $\ucb$-based policy. 
Another possibility is to use the family of softmax policies~\cite{besbes2014stochastic}. 
Along this line, one may obtain upper bounds independent from the number of breakpoints.

\if0
First, we have only considered the \ac{CM} setup. 
Although a \ac{CM} is powerful in explanaining user behavior, it can only learn up to the first click, which may ignore part of user feedback. 
Other click models that can handle multiple clicks such as DCM~\cite{guo09efficient} and DBN~\cite{chapelle09dynamic} may be considered as part of future work. 
We believe that our analysis can be adapted to those cases easily. 
Second, we focus on $\ucb$-based policy in building rankings. 
Another possibility is to use the family of softmax policies, which has original been designed for adversarial bandits~\cite{auer2002nonstochastic,besbes2014stochastic}.
Along this line, one may obtain upper bounds independent from the number of breakpoints. 
\fi

\section*{Acknowledgements}
This research was supported by
Ahold Delhaize,
the Association of Universities in the Netherlands (VSNU),
the Innovation Center for Artificial Intelligence (ICAI),
and
the Netherlands Organisation for Scientific Research (NWO)
under pro\-ject nr.\ 612.\-001.\-551.
All content represents the opinion of the authors, which is not necessarily shared or endorsed by their respective employers and/or sponsors.

\clearpage
\bibliographystyle{named}
\bibliography{Bibliography} 


\clearpage
\onecolumn
\appendix

\section{Proofs}

In the appendix, we refer to  $\cR_t^*$ as the optimal list at time $t$ that includes $K$ items sorted by the decreasing order of their attraction probabilities. 
We refer to $\cDst \subseteq [L]$ as the set of the $K$ most attractive  items in set $\cD$ at time $t$ and $\bar{\cD}_t$ as the complement of $\cDst$, i.e.  $\forall a^* \in \cD_t^*, \forall a \in \bar{\cD}_t: \alpha_t(a^*) \geq \alpha_t(a)$ and $\cDst \cup \cDct = \cD, \cDst \cap \cDct = \emptyset$. 
At time $t$, we say an item $a^*$ is optimal if $a^* \in \cDst$ and an item $a$ is suboptimal if $a \in \cDct$. 
We denote $\vr_t =\max_{R \in \Pi_K(\cD)} r(R, \alpha_t) - r(\vR_t, \vA_t)$ to be the regret at time $t$ of the learning algorithm. 
Let $\Delta_{a, a^*}^t$ be the gap of attraction probability between a suboptimal item $a$ and an optimal $a^*$ at time $t$: $\Delta_{a, a^*}^t = \alpha_t(a^*) - \alpha_t(a)$.
Then we refer to $\Delta_{a, K}$ as the smallest gap between item $a$ and the $K$-th most attractive item in all $n$ time steps when $a$ is not the optimal item: 
$\Delta_{a, K} = \min_{t \in [n], a \notin \cDst}  \alpha_t(K) - \alpha_t(a) $.

\subsection{Proof of Theorem 1}
\label{sec:proofoftheorem2}

Let $M_{t} = \{\exists a\in\cD~s.t.~|\alpha_t(a) - \barvalphat(\gamma, a)| > c_t(\gamma, t)\}$ be the event that $\alpha_t(a)$ falls out of the confidence interval around $\barvalphat(\gamma, a)$ at time t, and  $\bar{M}_{t}$ be the complement of ${M}_{t}$. 
We re-write the $n$-step regret of $\cascadeducb$ as follows: 
\begin{equation}
\label{eq:R}
R(n)  = \expect{\sum_{t=1}^{n} \mathds{1}\{M_{t}\} \vr_t} + \expect{\sum_{t=1}^{n} \mathds{1}\{\bar{M}_{t}\} \vr_t}. 
\end{equation}
We then bound both terms above separated. 

We refer to $\cT$ as the set of all time steps such that for $t \in \cT$, $s \in [t - B(\gamma), t]$ and any item $a\in\cD$ we have $\alpha_s(a) = \alpha_t(a)$, where $B(\gamma) = \lceil \log_{\gamma}(\epsilon (1-\gamma)) \rceil$. 
In other words, $\cT$ is the set of time steps that do not follow too close after breakpoints. 
Since for any time step $t \notin \cT$ the estimators of attraction probabilities are biased, $\cascadeducb$ may recommend suboptimal items constantly. Thus, we get the following bound:
\begin{equation}
\label{eq:ducbdecompose}
	 \expect{\sum_{t=1}^{n} \mathds{1}\{M_{t}\} \vr_t}  \leq 
	 L\Upsilon_n B(\gamma) + \expect{\sum_{t\in\cT} \mathds{1}\{M_{t}\} \vr_t}.
\end{equation}
Then, we show that for steps $t \in \cT$, the attraction probabilities are well estimated for all items with high probability. 
For an item $a$, we consider the bias and variance of $\barvalphat(\gamma, a)$ separately. 
We denote:  
\[	\vX_t(\gamma, a) = \sum_{s=1}^{t} \gamma^{(t-s)} 		  	\mathds{1}\{a\in\vR_s, \vR_s(\vc_s)=a\}, \quad
	\vN_t(\gamma, a) = \sum_{s=1}^{t} \gamma^{(t-s)} \mathds{1}\{a\in\vR_s\},
\]
as the discounted number of being clicked, and the discounted number of being examined.

First, we  consider the bias. 
The ``bias" at time $t$ can be written as $x_t(\gamma, a)/\vN_t(\gamma, a) - \alpha_t$, where $x_t(\gamma, a) = \sum_{s=1}^{t} \gamma^{(t-s)} \mathds{1}\{a\in\vR_s\} \alpha_s(a)$. 
For $t\in\cT$: 
\begin{eqnarray*}
	|x_t(\gamma, a) - \alpha_t(a) \vN_t(\gamma, a)| &=& \abs{\sum_{s=1}^{t-B(\gamma)} \gamma^{(t-s)}(\alpha_s(a)-\alpha_t(a)\mathds{1}\{a \in \vR_t\}} \\ 
	&\leq& \sum_{s=1}^{t-B(\gamma)}\gamma^{(t-s)} \abs{(\alpha_s(a)-\alpha_t(a)}\mathds{1}\{a \in \vR_t\}\\
	&\leq& \gamma^{B(\gamma)} \vN_{t-B(\gamma)}(\gamma, a)
	\leq  \gamma^{B(\gamma)} \frac{1}{1-\gamma},
\end{eqnarray*}
where the last inequality is due to the fact that $\vN_{t-B(\gamma)}(\gamma, a) \leq 1/(1-\gamma)$. 
Thus, we get:
\begin{eqnarray*}
	\abs{\frac{x_t(\gamma, a)}{\vN_t(\gamma, a)} - \alpha_t(a)} 
	&\leq& \frac{\gamma^{B(\gamma)}}{(1-\gamma)\vN_t(\gamma, a)}
	\leq \left(1 \land  \frac{\gamma^{B(\gamma)}}{(1-\gamma)\vN_t(\gamma, a)}\right) \\
	&\leq& 
	\sqrt{\frac{\gamma^{B(\gamma)}}{(1-\gamma)\vN_t(\gamma, a)}}  
	\leq \sqrt{\frac{\epsilon \ln N_t(\gamma)}{\vN_t(\gamma, a)}} \leq \frac{1}{2} c_t(\gamma, a) , 
\end{eqnarray*} 
where the third inequality is due to the fact that $1\land x \leq \sqrt{x}$ and the last inequality is due to the definition of $B(\gamma)$. 
So, $B(\gamma)$ time steps after a breakpoint, the ``bias" is at most half as large as the confidence bonus;
and the second half of the confidence interval is used for the variance. 

Second, we consider the variance. 
For $a\in \cD$ and $t\in \cT$, let $M_{t,a} = \{|\alpha_t(a) - \barvalphat(\gamma, a)| > c_t(\gamma, t)\}$ be the event that $\alpha_t(a)$ falls out of the confidence interval around $\barvalphat(\gamma, a)$ at time $t$. 
By using a Hoeffding-type inequality (\cite[Theorem 4]{garivier-2011-nonstationary}), for an item $a\in\cD$,  $t\in \cT$, and any $\eta > 0$, we get:
\begin{eqnarray*}
P(M_{t,a}) &\leq& 
P\left( \frac{\vX_t(\gamma, a) - x_t(\gamma, a)}{\sqrt{\vN_t(\gamma^2, a)}} > \sqrt{\frac{\epsilon \ln N_t(\gamma)}{\vN_t(\gamma^2, a)}}\right) 
\\
&\leq& 
P\left( \frac{\vX_t(\gamma, a) - x_t(\gamma, a)}{\sqrt{\vN_t(\gamma^2, a)}} > \sqrt{\epsilon \ln N_t(\gamma)}\right) 
\\
&\leq&
\left\lceil \frac{\ln{N_t(\gamma)}}{\ln{(1+\eta)}} \right\rceil 
N_t(\gamma)^{-2\epsilon(1-\frac{\eta^2}{16})} .
\end{eqnarray*}
Thus, we get the following bound: 
\[
\expect{\sum_{t\in\cT} \mathds{1}\{M_{t}\} \vr_t} 
\leq 
2L\sum_{t\in\cT} 
\left\lceil \frac{\ln{N_t(\gamma)}}{\ln{(1+\eta)}} \right\rceil 
N_t(\gamma)^{-2\epsilon(1-\frac{\eta^2}{16})}.
\]
By taking $\eta = 4\sqrt{1-1/2\epsilon}$ such that  $1-\frac{\eta^2}{16} = 1$, and with $t_0 = (1-\gamma)^{(-1)}$
we get: 
\begin{eqnarray*}
\sum_{t\in\cT} 
\left\lceil \frac{\ln{N_t(\gamma)}}{\ln{(1+\eta)}} \right\rceil 
N_t(\gamma)^{-2\epsilon(1-\frac{\eta^2}{16})} 
&\leq& 
t_0 + \sum_{t\in\cT, t \geq t_0} 
\left\lceil \frac{\ln{N_{t_0}(\gamma)}}{\ln{(1+\eta)}} \right\rceil 
N_{t_0}(\gamma)^{-1}  
\\
&\leq& 
t_0 +
\left\lceil \frac{\ln{N_{t_0}(\gamma)}}{\ln{(1+\eta)}} \right\rceil 
\frac{n}{N_{t_0}(\gamma)} 
\\
&\leq& 
\frac{1}{1-\gamma} +
\left\lceil \frac{\ln{N_{t_0}(\gamma)}}{\ln{(1+\eta)}} \right\rceil 
\frac{n(1-\gamma)}{1-\gamma^{1/(1-\gamma)}} .
\end{eqnarray*}
We sum up and get the upper bound: 
\begin{equation}
	\label{eq:bound for outside}
	\expect{\sum_{t=1}^{n} \mathds{1}\{\bar{M}_{t}\} \vr_t} 
	\leq 
	L \Upsilon_n B(\gamma) 
	+
	2L\frac{1}{1-\gamma} +
	2L\left\lceil \frac{\ln{N_{t_0}(\gamma)}}{\ln{(1+\eta)}} \right\rceil 
	\frac{n(1-\gamma)}{1-\gamma^{1/(1-\gamma)}} .
\end{equation}


Third, we upper bound the second term in Eq.~\ref{eq:R}. 
The regret is caused by recommending a suboptimal item to the user and the user examines but does not click the item.
Since there are $\Upsilon_n$ breakpoints, we refer to $[t_1, \ldots, t_{\Upsilon_n}]$ as the time step of a breakpoint that occurs. 
We consider the time step in the individual epoch that does not contain a breakpoint. 
For any epoch and any time $t \in \{t_e, t_e+1, \ldots, t_{e+1}-1\}$, any item $a \in \cDci$ and any item $a^* \in \cDsi$, we define the event that item $a$ is included in $\vR_t$ instead of item $a^*$, and  item $a$ is examined but not clicked at time $t$ by: 
\begin{equation*}
\begin{split}
G_{t, a, a^*} =\{ \exists 1\leq k <\vc_t ~s.t.~ \vR_t(k) = a, \cR_t(k)  = a^*\}.
\end{split}
\end{equation*} 
Since the attraction probability remains constant in the epoch, we refer to $\cDsi$ as the optimal items and $\cDci$ as the suboptimal items in epoch $e$. 
By \cite[Theorem 1]{kveton15cascading}, the regret at time $t$ is decomposed as:
\begin{equation}
\label{eq:ducbrdecompose}
\mathbb{E}[\vr_t] \leq  \Delta^t_{a, a^*}\sum_{a \in \cDci} \sum_{a* \in \cDsi} \mathds{1}\{ G_{a, a^*, t} \} .
\end{equation}
Then we have:
\begin{equation}
\label{eq:inside}
\expect{\sum_{t=t_i}^{t_{i+1}-1} \mathds{1}\{\bar{M}_{t}\} \vr_t} 
= 
\sum_{t=t_i}^{t_{i+1}-1} \mathds{1}\{\bar{M}_{t}\} \expect{\vr_t}
\leq 
\sum_{a \in \cDci} \expect{\sum_{a^*\in\cDsi} \sum_{t=t_i}^{t_{i+1}-1}  
\Delta^t_{a, a^*} \mathds{1}\{ G_{a, a^*, t} \}} ,
\end{equation}
where the first equality is due to the tower rule, and the inequality is due to Eq.~\ref{eq:ducbrdecompose}. 

Now, for any suboptimal item $a$ in epoch $e$, we upper bound $ \expect{\sum_{a^*\in\cDsi} \sum_{t=t_i}^{t_{i+1}-1}  
	\Delta^t_{a, a^*} \mathds{1}\{ G_{a, a^*, t} \}} $. 
At time $t$,  event $\mathds{1}\{\bar{M}_{t}\}$ and event $ \mathds{1}\{a \in \vR_t, a \in \cDct\}$ happen when there exists an optimal item $a^*\in \cDsi$ such that:
 \[
 \alpha_t(a) + 2 c_t(\gamma, a) \geq \vU_t(a) \geq \vU(a^*) \geq \alpha_t(a^*) , 
\]
which implies that $2 c_t(\gamma, a) \geq \alpha_t(a^*) - \alpha_t(a)$. 
Taking the definition of the confidence interval, we get: \[\vN_t(\gamma, a) \leq \frac{16\epsilon \ln N_{t}(\gamma)}{\Delta^{2}_{t, a, a^*}},\] 
where we set $\Delta_{t, a, a^*} = \Delta^{t}_{ a, a^*}$.

Together with Eq.~\ref{eq:inside}, we get: 
\begin{eqnarray}
\expect{
	\sum_{t=t_i}^{t_{i+1}-1} \mathds{1}\{\bar{M}_{t}\} \vr_t
} 
	&\leq& 
	\sum_{a \in \cDci} \expect{\sum_{a^*\in\cDsi} 
	 \frac{16\epsilon \ln N_{t}(\gamma)}{\Delta_{t, a, a^*}}}\notag
	 \\
	&\leq&
	16\epsilon \ln N_{t}(\gamma)
	 \left[
	 	\Delta_{t, a, 1}\frac{1}{\Delta^2_{t, a, 1}} + \sum_{a^*=2}^{K} \Delta_{t, a, a^*} \left( \frac{1}{\Delta^2_{t, a, a^*}} - \frac{1}{\Delta^2_{t, a, a^*-1}} \right)
	 \right] \notag
	 \\
	 &\leq& 
	 \frac{32\epsilon \ln N_{t}(\gamma)}{\Delta_{t, a, K}} ,
\end{eqnarray}
where the last inequality is due to \cite[Lemma 3]{kveton2014matroid}. 
Let $\Delta_{a, K} = \min_{t \in [n]} \Delta_{t, a, K}$ be the smallest gap between the suboptimal item $a$ and an optimal item in all time steps.  
When $\vN_t(\gamma, a) > \frac{32\epsilon \ln N_t(\gamma)}{\Delta^2_{a, K}}$, $\cascadeducb$ will not select item $a$ at time $t$.
Thus we get: 
\begin{eqnarray}
\label{eq:ducbinside2}
	\sum_{a \in \cD} \expect{ \sum_{t=1}^{n} \mathds{1}\{\bar{M}_{t}\} \mathds{1}\{a \in \vR_t, a \in \cDct\}} &\leq& 
	\sum_{e\in [\Upsilon_n]} \sum_{a \in \cDci} \frac{32\epsilon \ln N_{t}(\gamma)}{\Delta_{t, a, K}}   \notag
	\\
	&\leq&
	\sum_{a \in \cD} \lceil n (1-\gamma) \rceil 
	 \frac{32\epsilon \ln N_n(\gamma)}{\Delta_{a, K}}
	\gamma^{1/(1-\gamma)} ,
\end{eqnarray}
where the last inequality is based on \cite[Lemma 1]{garivier-2011-nonstationary}.

Finally, together with Eq.~\ref{eq:R}, Eq.~\ref{eq:ducbdecompose}, Eq.~\ref{eq:bound for outside}, Eq.~\ref{eq:ducbrdecompose}, 
Eq.~\ref{eq:inside} and Eq.~\ref{eq:ducbinside2}, we get \cref{th:upperboundcascadeducb}.

\subsection{Proof of Theorem 2} 
\label{sec:proofoftheorem3}

Let $M_{t} = \{\exists a\in\cD~s.t.~|\alpha_t(a) - \barvalphat(\tau, a)| > c_t(\tau, t)\}$ be the event that $\alpha_t(a)$ falls out of the confidence interval around $\barvalphat(\tau, a)$ at time t, and  let $\bar{M}_{t}$ be the complement of ${M}_{t}$.  
We re-write the $n$-step regret of $\cascadeswucb$ as follows: 
\begin{equation}
\label{eq:swucbR}
R(n)  = \expect{\sum_{t=1}^{n} \mathds{1}\{M_{t}\} \vr_t} + \expect{\sum_{t=1}^{n} \mathds{1}\{\bar{M}_{t}\} \vr_t} . 
\end{equation}
We then bound both terms in Eq.~\ref{eq:swucbR} separately. 

First, we refer to $\cT$ as the set of all time steps such that for $t \in \cT$, $s \in [t - \tau, t]$ and any item $a\in\cD$ we have $\alpha_s(a) = \alpha_t(a)$.  
In other words, $\cT$ is the set of time steps that do not follow too close after breakpoints. 
Obviously, for any time step $t \notin \cT$ the estimators of attraction probabilities are biased and $\cascadeswucb$ may recommend suboptimal items constantly. Thus, we get the following bound:
\begin{equation}
\label{eq:swucbdecompose}
\expect{\sum_{t=1}^{n} \mathds{1}\{M_{t}\} \vr_t}  \leq 
L\Upsilon_n \tau + \expect{\sum_{t\in\cT} \mathds{1}\{M_{t}\} \vr_t}  .
\end{equation}
$\tau$ time steps after a breakpoint, the estimators of the attraction probabilities are not biased.

Then, we consider the variance. 
By using a Hoeffding-type inequality (\cite[Corollary 21]{garivier-2008-nonstationary}), for an item $a\in\cD$, $t\in\cT$, and any $\eta > 0$, we get:
\begin{eqnarray*}
	P\left( \abs{\barvalphat(\tau, a) -\alpha_t(a)} > c_t(\tau, t)\right) 
	&\leq&
	P\left( \barvalphat(\tau, a) > \alpha_t(a) +\sqrt{\frac{\epsilon \ln{(t\land \tau)}}{\vN_t(\tau, a)}}\right) 
	+ 	P\left( \barvalphat(\tau, a) < \alpha_t(a) -\sqrt{\frac{\epsilon \ln{(t\land \tau)}}{\vN_t(\tau, a)}}\right)
	\\
	&\leq&
	2\left\lceil \frac{\ln{(t\land \tau)}}{\ln(1+\eta)} \right\rceil \exp\left(-2\epsilon \ln(t\land \tau)(1-\frac{\eta}{16}) \right)
	\\
	&=& 
	2\left\lceil \frac{\ln{(t\land \tau)}}{\ln(1+\eta)} \right\rceil
	(t \land \tau)	^{-2\epsilon(1-\eta^2/16)} .
\end{eqnarray*}
Taking $\eta = 4\sqrt{1-\frac{1}{2\epsilon}}$, we have: 
$P\left( \abs{\barvalphat(\tau, a) -\alpha_t(a)}\right) \leq 2 \frac{\left\lceil \frac{\ln{(t\land \tau)}}{\ln(1+\eta)} \right\rceil}{t\land\tau} .
$
Thus, we get the following bound: 
\[
\expect{\sum_{t\in\cT} \mathds{1}\{M_{t}\} \vr_t} 
\leq 
2L\sum_{t\in\cT} 
\frac{\left\lceil \frac{\ln{(t\land \tau)}}{\ln(1+\eta)} \right\rceil}{t\land\tau}
 \leq
 \frac{L\ln^2(\tau)}{\ln(1+4\sqrt{1-1/2\epsilon)}} + \frac{2Ln\ln \tau}{\tau \ln(1+4\sqrt{1-1/2\epsilon})}.
\]
We sum up and get the upper bound: 
\begin{equation}
\label{eq:swucb bound for outside}
\expect{\sum_{t=1}^{n} \mathds{1}\{\bar{M}_{t}\} \vR_t} 
\leq 
L\Upsilon_n \tau 
+
\frac{L\ln^2(\tau)}{\ln(1+4\sqrt{1-1/2\epsilon)}} 
+ 
\frac{2Ln\ln \tau}{\tau \ln (1+4\sqrt{1-1/2\epsilon})} .
\end{equation}

Third, we upper bound the second term in Eq.~\ref{eq:swucbR}. 
The regret is caused by recommending a suboptimal item to the user and the user examines but does not click the item.
Since there are $\Upsilon_n$ breakpoints, we refer to $[t_1, \ldots, t_{\Upsilon_n}]$ as the time steps of a breakpoint. 
We consider the time step in the individual epoch that does not contain a breakpoint. 
For any epoch and any time $t \in \{t_e, t_e+1, \ldots, t_{e+1}-1\}$, any item $a \in \cDci$ and any item $a^* \in \cDsi$, we define the event that item $a$ is included in $\vR_t$ instead of item $a^*$ and  item $a$ is examined but not clicked at time $t$ by: 
\begin{equation*}
\begin{split}
G_{t, a, a^*} =  \{ \exists 1\leq k <\vc_t ~s.t.~ \vR_t(k) = a   , \cR_t(k)  = a^*\}.
\end{split}
\end{equation*} 
Since the attraction probabilities remain constant in the epoch, we refer to $\cDsi$ as the optimal items and $\cDci$ as the suboptimal items in epoch $e$. 
By \cite[Theorem 1]{kveton15cascading}, the regret at time $t$ is decomposed as:
\begin{equation}
\label{eq:swucbrdecompose}
\mathbb{E}[\vr_t] \leq  \Delta^t_{a, a^*}\sum_{a \in \cDci} \sum_{a* \in \cDsi} \mathds{1}\{ G_{a, a^*, t} \} .
\end{equation}
Then we have:
\begin{equation}
\label{eq:swucbinside1}
\expect{\sum_{t=t_i}^{t_{i+1}-1} \mathds{1}\{\bar{M}_{t}\} \vr_t} 
= 
\sum_{t=t_i}^{t_{i+1}-1} \mathds{1}\{\bar{M}_{t}\} \expect{\vr_t}
\leq 
\sum_{a \in \cDci} \expect{\sum_{a^*\in\cDsi} \sum_{t=t_i}^{t_{i+1}-1}  
	\Delta^t_{a, a^*} \mathds{1}\{ G_{a, a^*, t} \}} ,
\end{equation}
where the first equality is due to the tower rule, and the inequality if due to Eq.~\ref{eq:swucbrdecompose}. 

Now, for any suboptimal item $a$ in epoch $e$, we upper bound $ \expect{\sum_{a^*\in\cDsi} \sum_{t=t_i}^{t_{i+1}-1}  
	\Delta^t_{a, a^*} \mathds{1}\{ G_{a, a^*, t} \}} $. 
At time $t$,  event $\mathds{1}\{\bar{M}_{t}\}$ and event $ \mathds{1}\{a \in \vR_t, a \in \cDct\}$ happen when there exists an optimal item $a^*\in \cDsi$ such that:
\[
\alpha_t(a) + 2 c_t(\tau, a) \geq \vU_t(a) \geq \vU(a^*) \geq \alpha_t(a^*) , 
\]
which implies that $2 c_t(\tau, a) \geq \alpha_t(a^*) - \alpha_t(a)$. 
Taking the definition of the confidence interval, we get: \[\vN_t(\tau, a) \leq \frac{4\epsilon \ln N_{t}(\tau)}{\Delta^{2}_{t, a, a^*}},\] 
where we set $\Delta_{t, a, a^*} = \Delta^{t}_{ a, a^*}$.

Together with Eq.~\ref{eq:swucbinside1}, we get: 
\begin{eqnarray}
\expect{
	\sum_{t=t_i}^{t_{i+1}-1} \mathds{1}\{\bar{M}_{t}\} \vr_t
} 
&\leq& 
\sum_{a \in \cDci} \expect{\sum_{a^*\in\cDsi} 
	\frac{4\epsilon \ln N_{t}(\gamma)}{\Delta_{t, a, a^*}}}\notag
\\
&\leq&
4\epsilon \ln N_{t}(\gamma)
\left[
\Delta_{t, a, 1}\frac{1}{\Delta^2_{t, a, 1}} + \sum_{a^*=2}^{K} \Delta_{t, a, a^*} \left( \frac{1}{\Delta^2_{t, a, a^*}} - \frac{1}{\Delta^2_{t, a, a^*-1}} \right)
\right] \notag
\\
&\leq& 
\frac{8\epsilon \ln N_{t}(\gamma)}{\Delta_{t, a, K}} ,
\end{eqnarray}
where the last inequality is due to \cite[Lemma 3]{kveton2014matroid}. 
Let $\Delta_{a, K} = \min_{t \in [n]} \Delta_{t, a, K}$ be the smallest gap between the suboptimal item $a$ and an optimal item in all time steps.  
When $\vN_t(\tau, a) > \frac{8\epsilon \ln N_t(\tau)}{\Delta^2_{a, K}}$, $\cascadeducb$ will not select item $a$ at time $t$.
Thus we get: 
\begin{eqnarray}
\label{eq:swucbinside2}
\sum_{a \in \cD} \expect{ \sum_{t=1}^{n} \mathds{1}\{\bar{M}_{t}\} \mathds{1}\{a \in \vR_t, a \in \cDct\}} &\leq& 
\sum_{e\in [\Upsilon_n]} \sum_{a \in \cDci} \frac{8\epsilon \ln N_{t}(\tau)}{\Delta_{t, a, K}}   \notag
\\
&\leq&
\sum_{a \in \cD} \left\lceil \frac{n}{\tau} \right\rceil 
\frac{8\epsilon \ln (n\land\tau)}{\Delta_{a, K}}  ,
\end{eqnarray}
where the last inequality is based on \cite[Lemma 25]{garivier-2008-nonstationary}.

Finally, together with Eq.~\ref{eq:swucbR}, Eq.~\ref{eq:swucbdecompose},
Eq.~\ref{eq:swucb bound for outside}, 
Eq.~\ref{eq:swucbinside1} and Eq.~\ref{eq:swucbinside2}, we get \cref{th:upperboundcascadeswucb}. 

\section{Further Experiments}
In this section, we compare  $\cascadeducb$, $\cascadeswucb$ and baselines on single queries. 
We pick 20 queries and report the results in~\cref{fig:singleregret}. 
The results exemplify that $\cascadeducb$ and $\cascadeswucb$ have sub-linear regret while other baselines have linear regret. 
\begin{figure}[t]
	\includegraphics{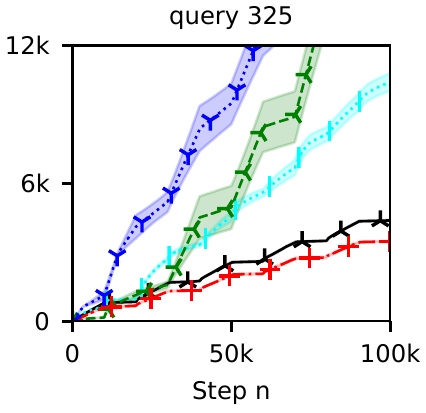}
	\includegraphics{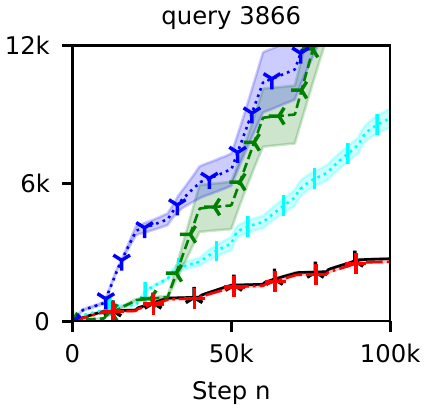}
	\includegraphics{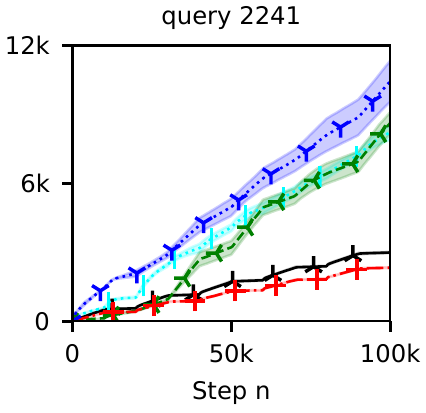}
	\includegraphics{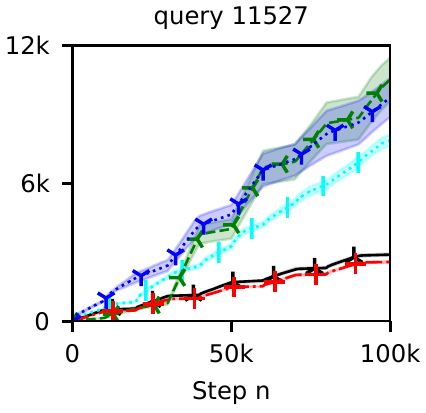}
	\includegraphics{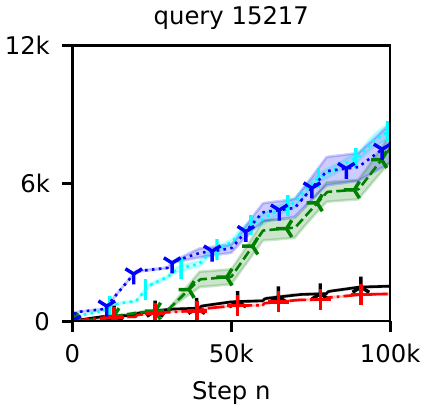}
	\includegraphics{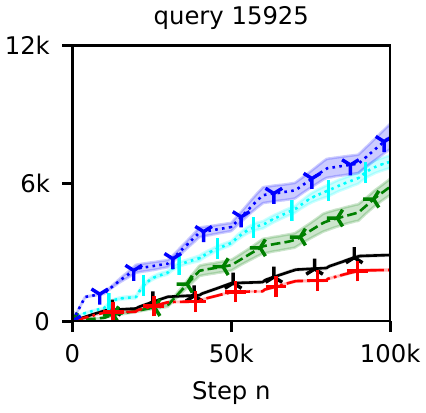}
	\includegraphics{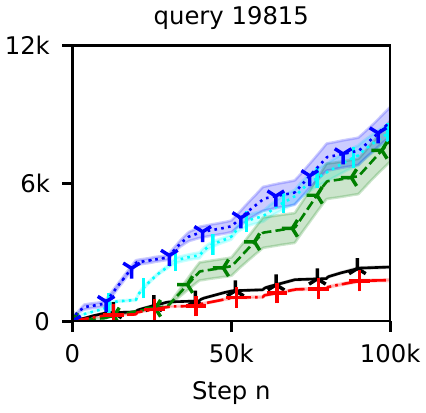}
	\includegraphics{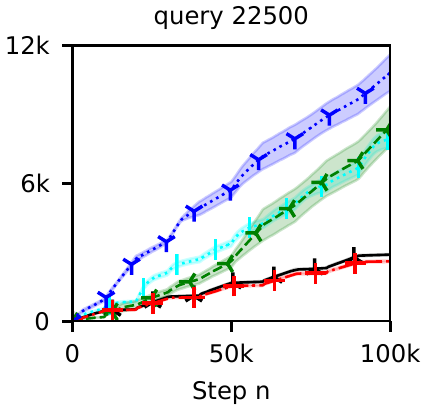}
	\includegraphics{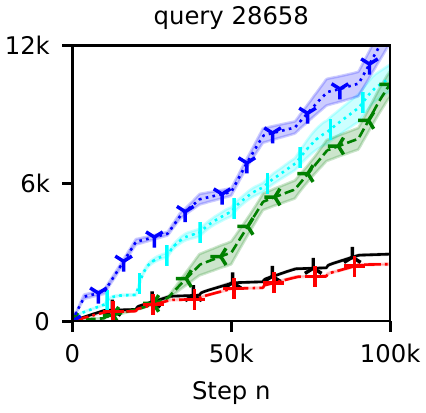}
	\includegraphics{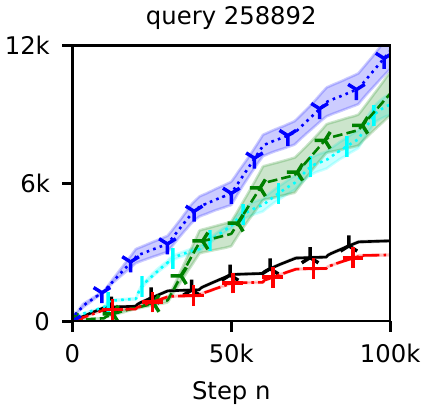}
	\includegraphics{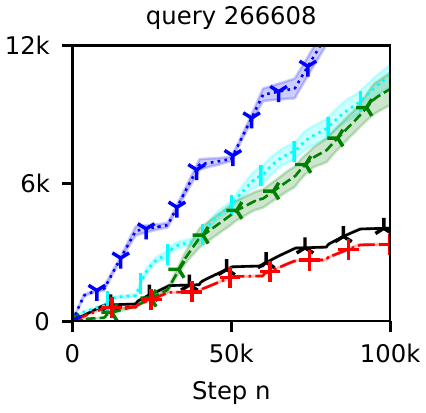}
	\includegraphics{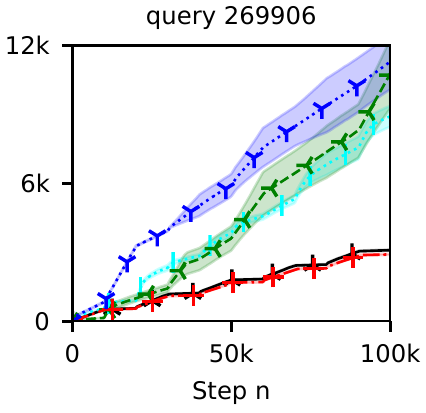}
	\includegraphics{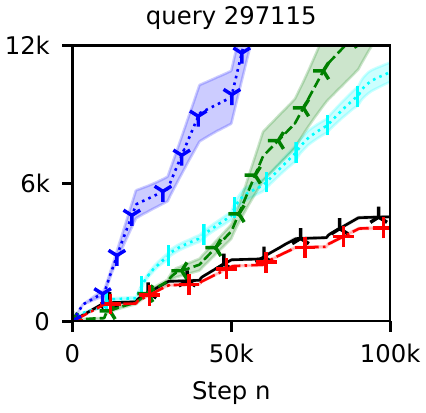}
	\includegraphics{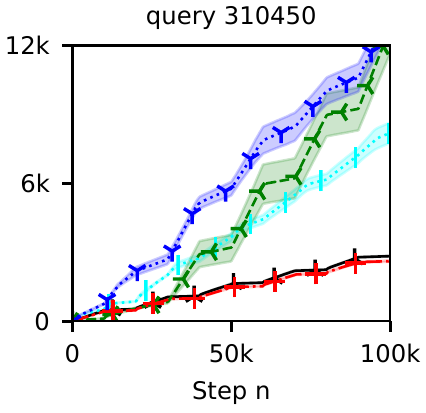}
	\includegraphics{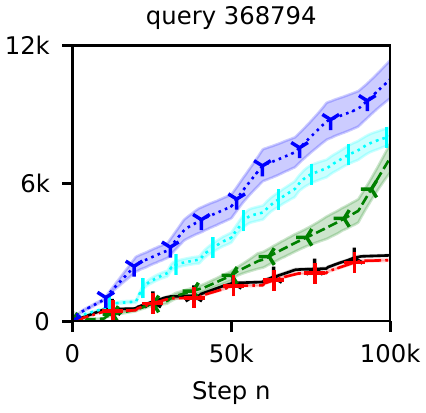}
	\includegraphics{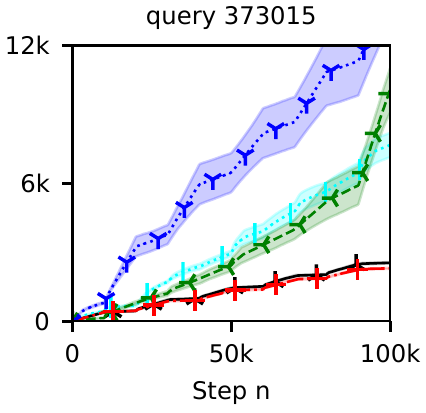}
	\includegraphics{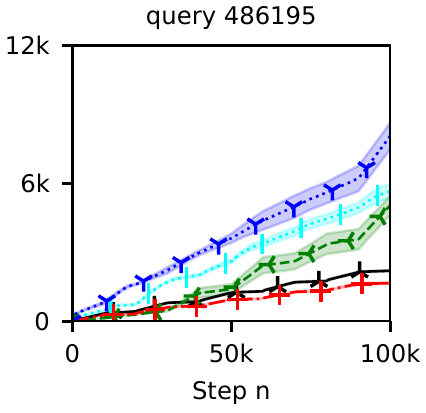}
	\includegraphics{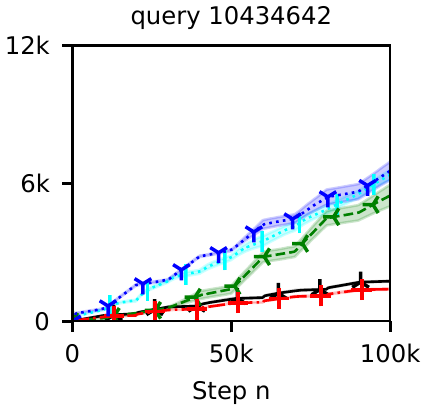}
	\includegraphics{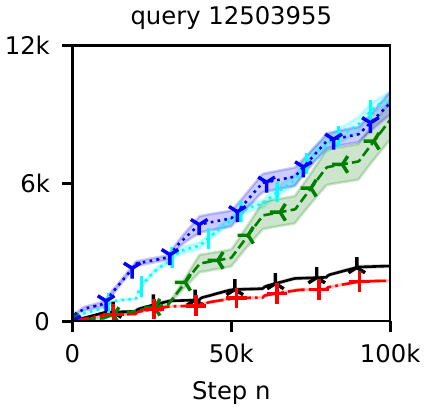}
	\includegraphics{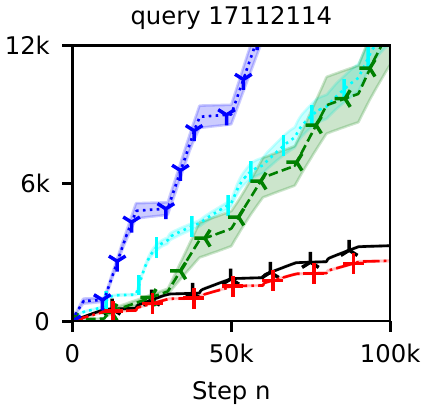}
	\caption{The n-step regret of $\cascadeducb$ (black), $\cascadeswucb$ (red), $\rankedexpthree$ (cyan), $\cascadeklucb$ (green) and $\bubblerank$ (blue)  on single queries in up to $100$k steps. Lower is better. The results are averaged over $10$ runs per query. The shaded regions represent standard errors of our estimates.}\label{fig:singleregret}
\end{figure}

\end{document}